\documentclass{article}[11pt]

\usepackage{amsmath, amsthm, amssymb}
\usepackage[english]{babel}
\usepackage{dsfont}
\usepackage{mathdots}
\usepackage{natbib}
\usepackage{graphicx}
\usepackage{enumerate}
\usepackage{color}
\usepackage{float}
\usepackage{placeins}
\usepackage{url}
\usepackage{comment}
\usepackage{todonotes}
\usepackage{rotating}
\usepackage{multirow}
\usepackage{diagbox}
\usepackage[a4paper, left=2.5cm, right=2.5cm, top=3cm, bottom=3cm]{geometry}

\newtheorem{thm}{{Theorem}}

\newtheorem{cor}[thm]{{Corollary}}

\newcommand{\var}{\mbox{Var}}

\newcommand{\BM}{{(La)}}
\newcommand{\KC}{{(RoLa)}}
\newcommand{\YW}{{(YW)}}
\newcommand{\CL}{{(Dantzig)}}
\newcommand{\TC}{{(TD)}}

\newcommand{\st}{{(st)}}
\newcommand{\ESTVECLASSO}{\emph{Vec-Lasso}}
\newcommand{\ESTROWLASSO}{\emph{Row-Lasso}}
\newcommand{\ESTROWDANTZIG}{\emph{Row-Dantzig}}

\def\eps{\varepsilon}
\newcommand{\gp}{g(p,d,n,\tau)}
\newcommand{\gps}{g(p,d,s,n,\tau)}

\newcommand{\textminbf}[1]{{\scriptsize{[\textit{#1}]}}}

\newcommand{\R}{\mathds{R}}
\newcommand{\C}{\mathds{C}}

\newcommand{\Z}{\mathds{Z}}
\newcommand{\E}{\mathds{E}}
\newcommand{\argmin}[1]{\operatornamewithlimits{argmin}_{#1}}
\newcommand{\veco}{\operatorname{vec}}
\newcommand{\diag}{\operatorname{diag}}
\newcommand{\THRl}{\operatorname{THR}_{\lambdan}}

\newcommand{\ind}{\mathds{1}}
\newcommand{\Gammah}{\hat{ {\Gamma}}^{(st)}(0)}

\newcommand{\Gammas}{{\Gamma}^{(st)}(0)}
\newcommand{\Sigmah}{\hat \Sigma_\eps}

\newcommand{\A}{\mathds{A}}
\newcommand{\lambdan}{\lambda_n}
\newcommand{\sign}{\operatorname{sign}}
\newcommand{\Sigmaeps}{\Sigma_\eps}
\newcommand{\fracd}[2]{#1/#2}
%1.5 spacing and 11pt font

\allowdisplaybreaks
\setcounter{MaxMatrixCols}{20}

\begin{document}

\begin{center}
{\bf  \huge Statistical Estimation of High-Dimensional Vector Autoregressive Models} 
\end{center} 
\begin{center}
{\bf Jonas Krampe$^{1}$ and Efstathios  Paparoditis$^{2}$}
\end{center}  
\noindent $^1$ University of Mannheim; j.krampe@uni-mannheim.de \\ $^2$ University of Cyprus; stathisp@ucy.ac.cy

\begin{abstract}
High-dimensional vector autoregressive (VAR) models are important tools for  the analysis of multivariate time series.  This paper  focuses on high-dimensional time series  and on the different regularized estimation procedures proposed for fitting sparse  VAR models to such  time series.   Attention is paid to the different sparsity assumptions imposed on the VAR parameters  and how these sparsity assumptions  are related  to the particular consistency 
properties of the estimators established. A sparsity scheme for  high-dimensional VAR models is proposed which is found to be  more appropriate for the  time series setting considered.  Furthermore,  it is shown that, under this  sparsity setting,   threholding  extents the consistency properties of  regularized estimators  to a wide range 
of matrix norms. Among other things, this enables application of the VAR parameters estimators to different  inference problems, like  forecasting or  estimating the second-order characteristics of the underlying  VAR process.  Extensive simulations compare  the finite sample behavior  of  the different regularized estimators  proposed using a variety  of performance criteria. 
\end{abstract}

\vspace{0.2cm}
{\bf Keywords:} \ Dantzig Selector, Lasso, Sparsity, Vector Autoregression, Yule-Walker Estimators, Thresholding

\section{Introduction}
The vector autoregressive (VAR) model is one of the most prominent and frequently  used models for analyzing multivariate time series;  see among others the  textbooks by  \cite{BrockwellDavis1991}, \cite{reinsel2003elements}, \cite{luetkepohl2007new}, \cite{tsay2013multivariate} and  \cite{kilian2017structural}. Due to the increasing availability of time series data, high-dimensional VAR models has attracted the interest of many researchers during the last two decades. Initiated by developments in the {i.i.d.} setup,  statistical methods have been proposed 
for inferring properties of  high-dimensional VAR models.
However, to make statistical inference for such models  possible, 
the  model complexity has to be reduced. 
One way to achieve this, is to  limit the allowed direct  influences between the variables (time series) involved in the high-dimensional VAR system. A common  strategy toward this,  is to impose  some kind of sparsity or of approximately sparsity assumptions.
%;  see  Section~\ref{sec.2} for a  review of the literature related to this issue.
For  sparse VAR models, the dimension of the unknown parameters is considerably  reduced by assuming that a large number of these  parameters  is zero. That is,  only  few ``variables'' out of a large set of potential ``explanatory variables'' are allowed  to have a direct influence on  the other variables of the system. In an approximately sparse setting, it is allowed that a large number of parameters is not exactly 
 zero but rather small in magnitude. As we will see, most of the   sparsity patterns used in the literature are motivated by and related to 
 the  particular estimation method used. Apart from reviewing the different sparsity settings used in the literature and the corresponding estimation methods proposed, one of the aims  of this paper is to  introduce  a sparsity pattern  for high-dimensional VAR models  which is more appropriate  for  VAR  models. Furthermore, for the introduced sparsity setting, we 
 use thresholding of  regularized estimators as a tool to  extent their consistency  to a wide range 
 of matrix norms.   

 The main estimation strategy followed in the high-dimensional setting is to use some kind of regularized estimator. In this  context, three different procedures  have been proposed: Regularized least squares (LASSO), see  \cite{basu2015,kock2015oracle}; regularized maximum-likelihood estimators, see among others \cite{basu2015,davis2016sparse}, and regularized Yule-Walker estimators using the CLIME approach or the Dantzig estimator, respectively; see \cite{han2015direct,wu2016performance}. We refer to   \cite{cai2011constrained} for the CLIME method, and to \cite{candes2007dantzig} for the Dantzig estimator.  Different sparsity patterns are used in the aforementioned   papers and 
 an overview of these different patterns  as well as on their impact on  the consistency properties of the  estimators used,  is 
 given in the next section.

We first fix some notation. For a vector $x\in \R^d$,  $\|x\|_0 = \sum_{j=1}^d \ind(x_j \not = 0)$ denotes the number of non-zero coefficients, $ \| x \|_1 = \sum_{j=1}^d |x_j|$, $\| x \|_2^2 = \sum_{j=1}^d |x_j|^2$, and $\|x\|_\infty=\max_j |x_j|$ denote the $ l_1$, $l_2$ and  $l_\infty$ norm, respectively. Furthermore, for an $r\times s$ matrix $B=(b_{i,j})_{i=1,\ldots,r, j=1,\ldots,s}$, $\|B\|_l=\max_{x \in \R^s : \|x\|_l=1} \|Bx\|_l, l \in [1,\infty]$ with $\|B\|_1=\max_{1\leq j\leq s}\sum_{i=1}^r|b_{i,j}|=\max_j \| B e_j\|_1$, 
$\|B\|_\infty=\max_{1\leq i\leq r}\sum_{j=1}^s|b_{i,j}|=\max_{i} \| e_i^\top B\|_1$, and  $\|B\|_{\max}=\max_{i,j} |e_i^\top B e_j|$, where $e_j=(0,\ldots,0,1,0,\ldots, 0)^\top$ denotes  a  vector of appropriate dimension with the one appearing in the $j$th position and zero elsewhere.  Denote the largest absolute eigenvalue of a square matrix $B$ by $\rho(B)$ and note that $\|B\|_2^2=\rho(BB^\top)$. The $d$-dimensional identity matrix is denoted by $I_d$. Furthermore,  for the Kronecker product of  two matrices $A$ and $B$ we write  $A\otimes B$, and for a matrix $A$, $\veco(A)$ denotes its vectorization  obtained  by stacking the columns to a vector; see among others Appendix A.11 in \cite{luetkepohl2007new}. For a vector $x\in \R^d$, $\diag(x)$ is a diagonal matrix of size $d$ the  entries of which on the main diagonal are given by the elements of the vector $x$.

Let $\{X_t,t \in \Z\}$ be a $d$-dimensional vector autoregressive process of order $p$, in short VAR$(p)$, given by 
\begin{align}
    X_t=\sum_{j=1}^p A_j X_{t-j}+\eps_t, \label{eq.VARp}
\end{align}
where $\eps_t$ is a white noise process  with covariance $\Sigmaeps$. We first summarize  some results for VAR$(p)$ processes which will be  used later on and which can be found in many  standard textbooks for time series analysis; see among others,  \cite{luetkepohl2007new} and \cite{,tsay2013multivariate}. A VAR$(p)$ model can be stacked to a VAR$(1)$ model as  follows. Let
$W_t = (X_t^\top,X_{t-1}^\top,\dots,X_{t-p+1}^\top)^\top \in \R^{dp\times 1}$, 
$$\mathds{A}= \begin{pmatrix}
A_1& A_2 & \dots & A_p\\
I_d & 0 & \dots & 0\\
0 & \ddots & \ddots & \vdots\\
0 & \dots & I_d & 0 
\end{pmatrix}\in \R^{dp\times dp},
\E=e_1\otimes I_d=
\begin{pmatrix}
I_d \\
0\\
\vdots\\
0
\end{pmatrix}\in \R^{dp\times d} \text{ and } U_t=\E
\eps_t.
$$
Then,  
$$ W_t= \mathds{A} W_{t-1} + U_t $$ 
is the VAR$(1)$ stacked form of (\ref{eq.VARp}) and  $X_t=\E^\top W_t$. 
It is well known that a VAR$(p)$ process is stable if $\det(I-\sum_{s=1}^p A_s z^s)\neq 0$ for all $|z|\leq 1$ or equivalently, if $\rho(\A)<1$. In what follows, we 
%where $\rho(\A)$ denotes the maximal absolute eigenvalue of $\A$; 
%see Section 2.1 of \cite{luetkepohl2007new}. We 
assume that this stability condition  is satisfied. A stable VAR$(p)$ process has the moving average representation $X_t=\E^\top \sum_{j=0}^\infty \A^j U_{t-j}$ and its 
autocovariance matrix function  $ \Gamma: \Z \rightarrow \R^{d\times d}$, can be expressed 
as
\begin{equation}
    \Gamma(h)=\left\{ \begin{array}{lll}
    \E^\top \A^h \sum_{j=0}^\infty \A^j \E \Sigmaeps \E^\top (\A^\top)^j \E & & \mbox{for} \  h \geq 0,\\
    & & \\
    \Gamma(-h)^\top & & \mbox{for} \ h <0.
    \end{array} \right. \label{eq.ACF}
\end{equation}
The spectral density matrix of the VAR(p) process at frequency $\omega \in [-\pi,\pi]$ is given by 
\begin{align}
    %f(\omega)=\Big(\sum_{s=0}^\infty \E^\top \A^s \E \exp(-i s \omega)\Big) \Sigmaeps \Big(\sum_{s=0}^\infty \E^\top \A^s \E \exp(i s \omega)\Big)^\top
    f(\omega)=\frac{1}{2\pi}\mathcal{A}^{-1}(\exp(-i\omega)) \Sigmaeps  \big(\mathcal{A}^{-1}(\exp(i\omega))\big)^\top,
    \label{eq.spec}
\end{align}
and its inverse by 
\begin{align}
    %f(\omega)^{-1}=\Big(I_d - \sum_{s=1}^p A_s \exp(i s \omega)\Big)^\top \Sigmaeps^{-1} \Big(I_d - \sum_{s=1}^p A_s \exp(-i s \omega)\Big)=
    f^{-1}(\omega)= 2\pi \mathcal{A}(\exp(i\omega))^\top \Sigmaeps^{-1} \mathcal{A}(\exp(-i\omega)), \label{eq.spec.inv}
\end{align}
where $\mathcal{A}(z)=I_d - \sum_{s=1}^p A_s z^s$, $z\in \C$. 

Given observation $X_1,\dots,X_n$, we can write \eqref{eq.VARp} in a compact form as the  regression equation,    
\begin{align}
    \mathcal{Y}=\mathcal{X} B+ \mathcal{E}, \label{eq.VARregression}
\end{align}
where $\mathcal{Y}=(X_n,\dots,X_{p+1})^\top$ and $\mathcal{E}=(\eps_n,\dots,\eps_{p+1})^\top$ are $(n-p)\times d$ dimensional matrices, $\mathcal{X}=(W_{n-1},\dots,W_p)^\top$ is $(n-p)\times dp$ dimensional matrix and $B=(A_1^\top,\dots,A_p^\top)\in \R^{dp\times d}$.

Let $\widehat A_1,\dots,\widehat A_p$ be an estimator of the VAR parameter matrices $ A_1,\dots, A_p$ and let $\hat \A$ be the corresponding stacked matrix version of these estimators. The innovations can be estimated by $\hat \eps_t=X_t-\sum_{s=1}^p \hat A_s X_{t-s}$, $ t=p+1,\dots,n$, and can be used to construct estimators for the 
covariance matrix $ \Sigma_\varepsilon$. Successful  applications of the estimated VAR model require  consistency of the  estimators  of $ A_1, \ldots, A_p$ and $ \Sigma_\varepsilon$ used. Since for a fixed dimension $d$, the commonly used  matrix norms are  equivalent, it is  not important  in a  low-dimensional setting, with respect to which matrix norm  consistency of the estimators is established. This however, changes in the high-dimensional setting. As we will see in Section~\ref{sec.2}, some of the estimators proposed in the literature,  are consistent with respect to some matrix norms only. Notice that  the consistency requirements on the estimators $ \widehat{A}_s$ and  $ \widehat{\Sigma}_\varepsilon$,  also depend on the applications of the VAR model one has in mind. For instance, for consistency of  the one step ahead forecast  $\hat X_{n+1}=\sum_{s=1}^d \hat A_s X_{n+1-s}$, 
it suffices to have consistency of the estimators $ \widehat{A}_s$  with respect to the $\|\cdot\|_\infty$ norm, that is  $\sum_{s=1}^d \|A_s-\hat A_s\|_\infty=\|\A-\hat \A\|_\infty=o_P(1)$. However,
if interest is directed towards estimating the autocovariance matrix function $ \Gamma(h)$, then, additionally,  
$\|\A-\hat \A\|_1=\|\A^\top-\hat \A^\top\|_\infty=o_P(1)$ is required; see also (\ref{eq.ACF}). If one wants to 
to consistently estimate the  spectral density matrix $f$ of the model, then additionally consistency with respect to the $\|\cdot\|_1$ is required, i.e., $\sum_{s=1}^d \|A_s-\hat A_s\|_1=o_P(1)$. Interest in consistently estimating
the aforementioned second-order characteristics  of the VAR model arises, for instance, 
in the context of 
%see \eqref{eq.ACF} and \eqref{eq.spec.inv} and see also Corollary~\ref{thm.acf} and \ref{thm.spec}. The second-order properties are for instance necessary to obtain a 
bootstrap-based inference for high-dimensional VAR models; see \cite{krampe2018bootstrap}. 
Observe that   $\|\A-\hat \A\|_\infty=o_P(1)$ and $\|\A-\hat \A\|_1=o_P(1)$ implies that  $\|\A-\hat \A\|_l=o_P(1)$ for any  $l \in [1,\infty]$.

\section{Different Sparsity Patterns for VAR Models} \label{sec.2}
In this section we  discuss in more detail 
some of the sparsity patterns that have been  used in the literature.
We first note that 
there is a strong connection between the particular sparsity 
assumptions made and the  error bounds obtained for the corresponding parameter estimators. 
Moreover, the sparsity patterns imposed are often motivated by the particular estimation procedure used. Hence, and 
for a  better comparison of the different sparsity patterns used, we also include in our 
discussion  the estimators developed and the error bounds obtained. We focus in the following on   $\ell_1$ penalized estimators  with  tuning parameters always  denoted by $\lambdan$. 

We begin with the sparsity pattern used in  \cite{basu2015}. The authors  use a  vectorized version of \eqref{eq.VARregression}, i.e., $\veco(\mathcal{Y})=(I_d\otimes\mathcal{X})\veco(B)+\veco(\mathcal{E})$ and  formulate the following $\ell_1$-penalized estimator 
\begin{align} \label{eq.mse.basu}
    \hat \beta^{\BM}=\argmin{\beta\in R^{d^2p}} \frac{1}{n-p} (\veco(\mathcal{Y})-(I_d\otimes\mathcal{X})\beta)^\top W(\veco(\mathcal{Y})-(I_d\otimes\mathcal{X})\beta)+\lambdan \|\beta\|_1,
\end{align}
where $W$ is a weighting matrix. $W=I_{d(n-p)}$ leads to a $\ell_1$-penalized least squares estimator and $W=(\Sigmaeps^{-1}\otimes I_{n-p})$  to a $\ell_1$-penalized maximum likelihood estimator.
Weighting is helpful if $\Sigmaeps$ is not well  approximated by a diagonal matrix $\sigma^2 I_d$ for some $\sigma^2>0$.
Note \cite{basu2015} considered Gaussian innovations. Furthermore, they assumed that $\veco(B)$ is a sparse vector in the sense that  $\|\veco(B)\|_0=k$.
%,
%where  $\|\veco(B)\|_0$ is the number of non-zero elements of the vector $\veco(B)$. 
Recall that the parameter matrices $A_1,\dots,A_p$ have  $d^2p$ unknown coefficients. Hence, by this sparsity assumption, the number of non-zero coefficients of the VAR system   is limited  by $k$. For the estimator (\ref{eq.mse.basu}), \cite{basu2015} obtained on a set having  high probability, the  error bound 
\begin{align}
    \|\hat \beta^{\BM}-\veco(B)\|_1\leq C k  \sqrt{\log (pd^2)/(n-p)},
\end{align} 
where $C$ is some constant depending on properties of the process $\{X_t\}$ but not on $n,p$ and $k$. Since  $\|\veco(\cdot)\|_1$ is the sum of all component-wise absolute errors, an error bound for $\|\veco(\cdot)\|_1$ implies an error bound with respect to  most matrix norms. Let $\hat A_1^{\BM},\dots,\hat A_p^{\BM}$ be the estimators
of the parameter matrices corresponding to $\hat \beta^{\BM}$. We  have  $\sum_{s=1}^p \|\hat A_s^{\BM}-A_s\|_1\leq \|\hat \beta^{\BM}-\veco(B)\|_1$ and $\sum_{s=1}^p \|\hat A_s^{\BM}-A_s\|_\infty\leq \|\hat \beta^{\BM}-\veco(B)\|_1$. 
Limiting the total number of non-zero coefficients by $k$, has, however,  a major impact on the growth rate allowed for the dimension $d$ of the VAR system. To elaborate,  consider the case $p=1$ and assume  that   
the VAR(1)  process solely  consists of  $d$ univariate AR$(1)$ processes, i.e., that $A$ is a diagonal matrix. Then, without further sparsity restrictions we have $\|\veco(A)\|_0=d$. Thus, and  by  ignoring the log-term,  the estimator $\hat A^{\BM}$ of \cite{basu2015} is  consistent if the dimension $ d$ increases slower than $\sqrt{n}$, that is if $d=o_P(\sqrt{n})$. This implies that the sparsity pattern $\|\veco(A)\|_0\leq k$ used in \cite{basu2015}, can be satisfied   in the high-dimensional setting considered, if the majority of the time series included in the VAR(1) system are essentially  white noise series. 
This seems, however,   to be  rather restrictive   for time series.   This is so since  it is not uncommon to assume that some kind of interactions  between the components of the $d$-dimensional process $\{X_t\}$ exists. If this is the case and the time series included in the VAR(1) system are not white noises,   then  $\|\veco(A)\|_0\geq d$ which contradicts  the  sparsity condition $\|\veco(A)\|_0 = k$. Therefore,  a price for obtaining consistency results with respect to the strong    $\|\veco(\cdot)\|_1$  norm seems to be paid by  the rather restrictive sparsity assumptions one has to impose on the parameters of the VAR system.

Instead of using a vectorized version of \eqref{eq.VARregression}, \cite{kock2015oracle} partitioned the same regression equation 
%\eqref{eq.VARregression} 
into single equations by formulating  one equation 
for each time series, i.e., $\mathcal{Y}e_j=\mathcal{X} B e_j+ \mathcal{E}e_j$ where $ j=1,\dots,d$. They then propose  the following $\ell_1$-penalized least squares estimator of $ \beta_j =Be_j$, 
\begin{align}
    \hat \beta_j^{\KC}=\argmin{\beta \in \R^{dp}} \frac{1}{n-p}\|\mathcal{Y}e_j- \mathcal{X} \beta\|_2+\lambdan\|\beta\|_1 \label{eq.row.LASSO}.
\end{align}
Notice that the tuning parameter $\lambdan$ may differ from equation to equation, i.e., $\lambda_n$ may depend on  $j\in \{1,2, \ldots, d\}$. In  terms of the original VAR$(p)$ equation \eqref{eq.VARp}, it is clear that this estimator is formulated row-wise, that is, each time series is modelled  individually by fitting  a $l_1$-regularized regression. This has the drawback that a weighting as in \eqref{eq.mse.basu} can not be  easily implemented.  However, one  advantage is that, now,  the sparsity assumptions can be formulated row-wise, that is $e_j^\top(A_1,\dots,A_p)$ can be assumed to be  a sparse vector. \cite{kock2015oracle} assume  that  $\sum_{s=1}^d \|e_j^\top A_s\|_0=\|B e_j\|_0\leq k_j$. Under the assumption of  Gaussian innovations, they  obtain on a set with high probability, that
\begin{align}
    \|\hat \beta_j^\KC-Be_j\|_1\leq C k_j \sqrt{\log(pd)/(n-p)}, \label{eq.error.KC}
\end{align}
where is the above bound 
some  additional log-terms have been omitted for simplicity. 
Let $\hat A_1^\KC,\dots,\hat A_p^\KC$ be the estimators of the  matrices $ A_s$ corresponding to $(\hat \beta_1,\dots,\hat \beta_d)$. Then, the bound in  \eqref{eq.error.KC} expressed with respect  to the rows of  $ \widehat{A}_s-A_s$,  implies that  $\sum_{s=1}^p \|\hat A_s^\KC-A_s\|_\infty\leq C \max_j k_j \sqrt{\log(pd)/(n-p)}$. Hence, and according to this approach,  the  parameters of the VAR system can  consistently be estimated with respect to the $\|\cdot\|_\infty$   matrix norm. Furthermore, the corresponding sparsity pattern only requires  that the number of non-zero coefficients within the $j$th  row is limited by $\max_j k_j$. Thus, each time series at time point $t$ can  be directly affected by $\max_j k_j$ other lagged variables. Therefore, this sparsity pattern is,  more flexible than the one considered in Basu and Michailidis (2015)  in which  the total number of non-zero coefficients  is limited, i.e.,   $\|\veco(B)\|_0=k$ is assumed. To further clarify the differences, consider again the example of a VAR$(1)$ process which solely consists of $d$ univariate AR$(1)$ processes. Then, $\max_j \|e_j^\top A\|_0=1$ and $\hat A^\KC$ is consistent if $d=o_P(\exp(n))$. Hence, the dimension $d$ of the system can grow much faster compared to what is allowed for  the sparsity pattern used in \cite{basu2015} and for which, as we have seen, $d=o_P(\sqrt{n})$. 

However, since  the estimators $\hat A_1^\KC,\dots,\hat A_p^\KC$ are constructed  row-wise by fitting a regularized regression to each time series, an error bound with respect to the  $\|\cdot\|_1$ norm,  cannot easily be obtained.  It may even not be possible without imposing further assumptions on the VAR process. Furthermore,  within  this sparsity framework, it is possible that there exists a time series which affects all others, that is,  for some $j$ the coefficient  matrix $A_j$ has a dense column. Regarding  the bound with respect to $ \|\veco(\cdot)\|_1$ of the estimator (\ref{eq.row.LASSO}),  we have  \begin{align*}
\|\veco((\hat A_1^\KC,\dots,\hat A_p^\KC))-\veco(B)\|_1 & =\sum_{j=1}^d \|\hat \beta_j^\KC-\beta_j\|_1 \leq C \sum_{j=1}^d k_j \sqrt{\log(pd)/(n-p)},
\end{align*}
which leads to the following bound with respect to the  matrix norm $\|\cdot\|_1$:
\[ \|\hat \A^\KC-\A\|_1\leq\sum_{s=1}^p \|\hat A_s^\KC-A_s\|_1\leq C \sum_{j=1}^d k_j \sqrt{\log(pd)/(n-p)}.\] 
It is not clear if this bound can be improved 
to  $Cd\sqrt{\log(pd)/n}$.
%, but any further improvement requires additional assumptions. 
Moreover,  the considered  sparsity pattern, which is more flexible than the one in Basu and Michailidis (2015), is  possible only if consistency of the estimators with respect to the $\|\cdot\|_\infty$ norm  is required. 
This implies that  this estimator can be used in a   high-dimensional setting  for forecasting purposes but it may be of limited value if one is interested in   estimating the second-order properties of the VAR model, like the autocovariance matrix $ \Gamma(h)$ or the spectral density matrix $ f(\lambda)$.

The approaches of \cite{basu2015} and \cite{kock2015oracle} are inspired by the i.i.d regression setup. In contrast to this, the approach of \cite{han2015direct} is inspired by the setup of    high-dimensional covariance estimation. In this setup, CLIME (constrained $\ell_1$-minimization for inverse matrix estimation), see \cite{cai2011constrained}), provides  an approach to estimate the inverse covariance matrix and it is based on the Dantzig estimator, see \cite{candes2007dantzig}. The corresponding estimator of the precision matrix $ \Sigma^{-1}_\varepsilon$ is obtained  as the solution of the following optimization problem,
\begin{align}
    \min_{\Omega \in \R^{d\times d}} \sum_{i,j=1}^d |e_i^\top \Omega e_j| \text{ s.t. } \| \Sigma_{\eps,n} \Omega-I_d\|_\infty \leq \lambdan,  \label{eq.CLIME}
\end{align}
where $\Sigma_{\eps,n}$ is  the sample covariance matrix and $\lambda_n$ a tuning parameter. The above optimization problem can be splited   into sub-problems, that is,
$\hat \beta_j=\argmin{\beta \in \R^d} \|\beta\|_1$  s.t. $\|\Sigma_{\eps,n} \beta-e_j\|_\infty\leq \lambdan$. 
This sub-problem strategy enables  the derivation of   error bounds with respect to the  $\|\cdot\|_1$ norm  without the need for any  additional thresholding of the estimators obtained; see \cite{cai2011constrained} for details. \cite{han2015direct} focus on VAR$(1)$ model and  use the Yule-Walker equation $\Gamma(-1)=\Gamma(0)A^\top$ to formulate an optimization problem similar to \eqref{eq.CLIME}. They derive the following estimator of $ \beta_j = A^\top e_j$, $j\in \{1,2, \ldots, d\}$, 
%(already split up into sub-problems)
\begin{align} \label{eq.11}
    \hat \beta_j^\YW = \argmin{\beta \in \R^{d}} \|\beta\|_1 \text{ s.t. } \|S_0 \beta-S_1 e_j\|_{\max}\leq \lambdan,
\end{align}
where $S_0=1/n\sum_{t=1}^n X_t X_t^\top$ and  $S_1=1/(n-1)\sum_{t=1}^{n-1} X_t X_{t+1}^\top$ are  sample autocovariances at lag zero and lag minus one,  respectively. 
%$\hat \beta_j^\YW$ is an estimator $A^\top e_j$ 
Let $(\hat A^\YW)^\top=(\hat \beta_1^\YW : \dots : \hat \beta_d^\YW)$ be the estimator of $A^\top$  in matrix form.   \cite{han2015direct} follow \cite{cai2011constrained} regarding the sparsity assumptions they impose on the VAR(1) system. In particular, they assume that 
\begin{align}
    A\in  \mathcal{M}(q,s,M)=\Big\{B\in \R^{d\times d} : \max_{1\leq j \leq d} \sum_{i=1}^d | B_{j,i}|^q \leq s,  \|B\|_\infty\leq M\Big\}. \label{eq.row-wise.approx.sparse}
\end{align} 
This means, the rows of $A$, i.e., the columns of $A^\top$, are  considered as approximately sparse and bounded in $\ell_1$-norm by the positive constant $M$. Under  the sparsity  assumption (\ref{eq.row-wise.approx.sparse}), they obtain for  Gaussian innovations the following error bound, on a set with high probability,
\begin{align}
    \|A^\top-(\hat A^\YW)^\top\|_1=\|A-\hat A^\YW\|_\infty\leq C M \|\Gamma(0)^{-1}\|_1 s \sqrt{\log(d)/n}. \label{eq.YW}
\end{align}
As mentioned, the norm $\|\cdot\|_1$ arises canonically for the Dantzig estimator since the optimization sub-problems are build up column-wise for  the matrix $A^\top$. However, an error bound for $ \|A^\top-(\hat A^\YW)^\top\|_\infty= \|A-\hat A^\YW\|_1$ cannot be derived without further assumptions. In fact,  only the following naive bounds hold
true:
$\|A-\hat A^\YW\|_\infty\leq C M \|\Gamma(0)^{-1}\|_1 d \sqrt{\log(d)/n}$. \cite{wu2016performance} extended the approach of  using the CLIME method for VAR parameter estimation to general VAR$(p)$ processes and to possible non-Gaussian innovations. They focus on  error bounds with respect to the  $\|\cdot\|_{\max}$ norm. For this, they do not need to specify a particular sparsity pattern. Using the same  approximately  sparsity setting
%, i.e.,in the row-wise 
%approximate sparsity setting
\eqref{eq.row-wise.approx.sparse}, \cite{masini2019regularized} showed that the row-wise Lasso \eqref{eq.row.LASSO} possesses with high-probability and under some restrictions on the growth rates of $n$, $q$, and $s$,  the following  error bound with respect to the $ \|\cdot\|_2$ norm, 
\begin{align}
    \|\hat \beta_j^\KC-\beta_j\|_2^2 \leq C_\tau s \|\Gammas^{-1}\|_2^{(2-q)} \Big[( d^2 p n )^{2/r}/\sqrt{n} \Big]^{2-q}. \label{eq.error.lasso.approximate.sparse}
\end{align}
Here $\tau$ denotes  the number of finite moments of the innovations $ \eps_t$, i.e., $\max_{\|v\|_1\leq 1} (E |v^\top \eps_1\|^\tau)^{1/\tau}\leq c_\tau< \infty$, $ \tau > 4$, and $C_\tau$ denotes a particular  constant depending on $c_\tau$ and on $\tau$ only. For sub-Gaussian innovations, they obtained sharper error bounds which allow for a larger value of the dimension $d$ and which are similar to the rates given in \eqref{eq.error.KC}. 

%\cite{krampe2018bootstrap} combined the estimators \eqref{eq.row.LASSO} and \eqref{eq.YW} to obtain an estimator which is consistent in a high-dimensional setting to both norms $\|\cdot\|_1$ and $\|\cdot\|_\infty$. The estimator is given as  
%\begin{align}
%        \hat A_{s;i,j}^\CE  = \hat A_{s;i,j}^\KC \ind(|\hat A_{s;i,j}^\KC|\geq a_n, \hat A_{s;i,j}^\YW \geq a_n),
%\end{align}
%where $a_n$ is some threshold value and a proper choice is $a_n=\lambdan$. Their sparsity pattern is that $A_1,\dots,A_p$ are row- and column-wise sparse meaning $\max_j \sum_{s=1}^p \|e_j A_s\|_0\leq k$ and $\max_j \sum_{s=1}^p \| A_s e_j\|_0\leq k$. Then, they obtain under Gaussian innovations $\sum_{s=1}^p \|\hat A_s-A_s\|_1=O_P(k \sqrt{\log(p)/n})$ and $\sum_{s=1}^p \|\hat A_s-A_s\|_\infty=O_P(k \sqrt{\log(p)/n})$. That means this estimator can be used for most applications and this sparsity pattern is flexible enough to handle reasonable high-dimensional time series settings. 
%In the next section, we extend their results.

\section{A Sparsity Setting for  VAR Time Series}
Our aim in this section is twofold. First, and based on the discussion of the previous section, we   introduce a sparsity setting for VAR models which is appropriate for the high-dimensional time series setup considered in this paper. Second, for the sparsity setting introduced, we then   derive 
estimators of the VAR model parameters, which are consistent with respect to all  matrix norms $ \|\cdot\|_l$, for $ l\in [1,\infty]$.

As already mentioned, the aim of any sparsity pattern is  to reduce the complexity of the model such that consistent  estimation becomes possible even  in a high-dimensional setup. Towards developing a sparsity setting which is appropriate for high-dimensional time series,  and in particular for VAR models, it is worth to first recall the meaning of the coefficients of the parameter matrices $ A_s$, $s=1,2, \ldots, p$.  The coefficients $e_j^\top(A_1,\dots,A_p)$, i.e., those in the  $j$th row of the autoregressive matrices,  describe the direct linear influence of all time series (in lagged form), that is of $X_{t-1},\dots,X_{t-p}$, onto the $j$th component at time $t$, that is onto $X_{t;j}$. Furthermore, the coefficients on the  $j$th column of the matrix $A_k$, i.e., the coefficients  $A_k e_j$, describe the direct linear influence of the $j$th component at lag $k$, that is of  $X_{t-k;j}$, onto all time series of the system at time $t$, that is onto $X_t$. Imposing a sparsity pattern on the coefficient matrices  $(A_1,\dots,A_p)$ means that the direct influences between  the different time series are restricted. A reasonable  sparsity pattern will be one in which it is  assumed  that a single time series (including all its past  values up to lag $p$) can affect directly and can be directly affected  only by   a limited number of other time series (including their  lagged versions). This requirement leads to the need of  imposing   sparsity assumptions in the rows and in the columns of 
the matrices $ A_s$, $s=1,2, \ldots,p$. Row-wise sparsity in the form   $\max_j \sum_{k=1}^p\|e_j^\top A_k\|_0\leq s$, means that a time series can be influenced directly only by $s$ other time series (including their lagged values). However, for  column-wise sparsity,  two reasonable  options exist. The first  is the column-wise analog to the aforementined row-wise sparsity, which leads to the requirement  that $\max_j \sum_{k=1}^p\| A_k e_j\|_0\leq s$. This  means that a single time series $j$  and  all its past values, e.g.,  $X_{t-k;j},k=1,\dots,p$, has at most $s$ direct 
channels to affect the elements of the vector $X_t$. The second option one has is  the requirement $\max_j \max_{1\leq k \leq p} \| A_k e_j\|_0\leq s$. This  means that a single time series in one of its lagged versions, for instance  $X_{t-p;j}$, can affect at most $s$ other time series, that is at most  $s$ of the components of the vector  $X_t$. Hence, in the second option a single time series with all its lagged values  has at most $s\times p$ channels  to affect directly the components of  $X_t$. We mention here that  if there is a (near to) full interaction among the time series of the system, then it may be  more reasonable to consider alternative approaches for inferring properties of  high-dimensional time series.  Factor models and more specifically, dynamic factor models could   be a possible   alternative   in such a  case. Such an approach   will avoid the  imposition of   (unrealistic)  sparsity assumptions on the interaction between the  time series considered. However, other assumptions  are required in this case, like for instance, that  the evolution of the entire high dimensional system of  time series is driven by few non observable components.  We refer here to  the  surveys \cite{stock2005implications,stock2011dynamic,stock2016dynamic,bai2008large}. See also Chapter 16 in \cite{kilian2017structural}.

The above discussion regarding   an appropriate  sparsity pattern for high-dimensional VAR models, was devoted to the case of the so called  strict sparsity. This is the case where  coefficients are counted only if they are  different from zero.  Nevertheless we may also consider the case of  so called  approximately sparsity.  Towards this,  we adopt the approximately sparsity settings  used for high-dimensional covariance matrices; see among others \cite{bickel2008,rothman2009generalized}. Since  in contrast to covariance matrices, the parameter matrices $A_1,\dots,A_p$ are in general not symmetric,
we state  the following two classes of approximately sparse VAR$(p)$ matrices, where  each one of them refers to the two different column-wise sparsity options we have discussed before. 

\begin{align}
    \mathcal{M}^{(1)}(q,s,M,p)=\Big\{(M_1,\dots,M_p), M_i\in \R^{d\times d} :& \max_{1\leq j \leq d} \max_{1\leq k \leq p} \sum_{i=1}^d | A_{k;i,j}|^q \leq s, \max_{1\leq k \leq p} \|A_k\|_1\leq M,  \nonumber\\
    &
    \max_{1\leq i \leq d} \sum_{k=1}^p \sum_{j=1}^d | A_{k;i,j}|^q \leq s,  \sum_{k=1}^p \| A_k\|_\infty\leq M\Big\}, \label{eq.sparse.pattern}
\end{align}

\begin{align}
    \mathcal{M}^{(2)}(q,s,M,p)=\Big\{(M_1,\dots,M_p), M_i\in \R^{d\times d} :& \max_{1\leq j \leq d} \sum_{k=1}^p \sum_{i=1}^d | A_{k;i,j}|^q \leq s, \sum_{k=1}^p \|A_k\|_1\leq M,  \nonumber\\
    &
    \max_{1\leq i \leq d} \sum_{k=1}^p \sum_{j=1}^d | A_{k;i,j}|^q \leq s,  \sum_{k=1}^p \| A_k\|_\infty\leq M\Big\}, \label{eq.sparse.pattern2}
\end{align}
where $q \in [0,1)$. Notice that  $q=0$ refers to the case of strict sparsity whereas $q>0$ to that of approximately sparsity. In the remaining of this paper,  we focus on the pattern  $\mathcal{M}(q,s,M,p)=\mathcal{M}^{(1)}(q,s,M,p)$ only. This sparsity pattern is a generalization of the one  used in \cite{krampe2018bootstrap} and a subset of the sparsity pattern used in \cite{han2015direct} and \cite{masini2019regularized}. However, \cite{han2015direct} obtained  consistency only with respect to the   $\|\cdot\|_\infty$ norm, i.e., $\|\A-\hat \A^\YW\|_\infty$, whereas our aim is to obtain  consistency with respect  to  $\|\cdot\|_l$   for all values of $l \in [1,\infty]$. As mentioned, this will enable the use of the estimators  obtained in  several   applications, like forecasting or  estimating the second-order structure of the VAR process. 
 Since $\mathcal{M}^{(2)}(q,s,M,p) \subseteq \mathcal{M}(q,s,M,p)$, all results presented here also hold true for the other  sparsity pattern $\mathcal{M}^{(2)}(q,s,M,p)$ stated in (\ref{eq.sparse.pattern2}). If $(A_1,\dots,A_p) \in \mathcal{M}^{(2)}(q,s,M,p)$, then, additionally to $\|\A-\hat \A\|_l=o_P(1)$, $\sum_{k=1}^p \|A_k-\hat A_k\|_l=o_P(1)$, for all  $l \in [1,\infty],$ can be established. This  is important if one wants 
to obtain a consistent estimator of the inverse of the spectral density matrix of the VAR model; see Theorem~\ref{thm.spec} bellow  for details.  Notice  that the two sparsity patterns coincide for VAR$(1)$ models, i.e.,  $\mathcal{M}^{(2)}(q,s,M,1)=\mathcal{M}(q,s,M,1)$.

As we have seen,  regularization is an important tool for  obtaining consistent estimates in a high-dimensional setting. In the context  of  covariance matrix estimation, one approach is  thresholding the sample covariance matrix; see \cite{bickel2008,rothman2009generalized,cai2011adaptive}. Since the sample covariance  matrix is (under certain assumptions) consistent with respect to the   $\|\cdot\|_{\max}$ norm, thresholding helps to transmit  the  component-wise consistency to consistency  with respect to  a matrix norm. The CLIME method, see \cite{cai2011constrained}, achieves   consistency with respect to the $\|\cdot\|_{\max}$  also for the precision matrix, i.e., the inverse of the covariance matrix. As mentioned, \cite{han2015direct} use the CLIME method to estimate the parameter matrix $A$ of a VAR$(1)$ model and they established   $\|A-\hat A \|_{\max}=o_P(1)$. \cite{cai2011constrained} pointed  out that the optimization problem of the CLIME method can be split  into sub-problems which lead to error bounds with respect to  the $\|\cdot\|_1$ norm  without the use  of thresholding. \cite{han2015direct} followed this idea for  constructing an  estimator  of the transposed matrix,  leading to the error bound   \eqref{eq.YW}. \cite{wu2016performance} generalized the approach of \cite{han2015direct} to VAR$(p)$ processes and to possible non-Gaussian time series. In this context,  \cite{wu2016performance} obtained  the result $\|(A_1,\dots,A_p)-(\hat A_1,\dots,\hat A_p)\|_{\max}=o_P(1)$. The same result also can be established  for the Lasso. Hence, in order to obtain a consistent  estimator  for the sparsity pattern \eqref{eq.sparse.pattern} adopted in this paper, we propose to threshold an estimator which fulfills $\|(A_1,\dots,A_p)-(\hat A_1,\dots,\hat A_p)\|_{\max}=o_P(1)$. Toward this, we use the class of thresholding functions given by \cite{cai2011adaptive}. In particular, we require that a thresholding function $\THRl : \R  \to \R$ at threshold level ${\lambdan}$ satisfies the following three conditions:
    \begin{enumerate}
        \item $\THRl(z)\leq c |y|$ for all $z,y$ satisfying  $|z-y|\leq {\lambdan}$ and some $c \in (0,\infty)$.
        \item $\THRl(z)=0$ for $|z|\leq {\lambdan}$.
        \item $|\THRl(z)-z|\leq {\lambdan},$ for all $z\in \R$.
    \end{enumerate}
 For a matrix $A$ with elements $a_{i,j}$, we set $\THRl(A):=(\THRl(a_{i,j}))_{i,j}$, which means that  thresholding is applied component-wise. These conditions are satisfied among others by the soft thresholding, $\THRl^S(z):=\sign(z) (|z|-{\lambdan})_+$,  and the adaptive Lasso thresholding, $\THRl^{al}(z)=z \max(0,1-|{\lambdan}/z|^\nu$, with $\nu\geq1$; see \cite{cai2011adaptive} and Figure 1 in \cite{rothman2009generalized} for an illustration of the different thresholding operations. Note that the hard thresholding, $\THRl^H(z):=z \ind(|z|>{\lambdan})$,  does not fulfill  condition 1) above. %The proof of Theorem~\ref{thm.thres} can be modified by following the arguments of \cite{bickel2008,rothman2009generalized}
The following theorem  shows that this thresholding strategy  succeeds in obtaining estimators which are consistent with respect to all matrix norms $ \|\cdot \|_l$, for $ l \in [1,+\infty]$. 

\begin{thm}\label{thm.thres}
Let  $(A_1,\dots,A_p)\in \mathcal{M}(q,s,M,p)$ and assume that $(\hat A_1,\dots,\hat A_p)$ is an estimator which fulfills on a subset $\Omega_n$ of the sample space, the following condition
\begin{align}
  \max_{1\leq s\leq p} \|A_s-\hat A_s\|_{\max}\leq C_1 t_n. \label{eq.ass.thm} 
\end{align}
Then it holds true on the same subset $ \Omega_n$ with thresholding parameter ${\lambdan}=C_1 t_n$, that 
\begin{align}
\|\A-\THRl(\hat \A)\|_l \leq (4+c)C_1^{1-q}s  t_n^{1-q},
\end{align}
for all  $l \in [1,\infty]$. In the above expression,  $c$ is a constant which depends on  the particular  thresholding function  used.
\end{thm}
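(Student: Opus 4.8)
The plan is to reduce the matrix-norm bound to a single component-wise thresholding estimate, and then to recover all intermediate norms by interpolation. First I would use the companion structure of $\A$: the sub-diagonal blocks (the copies of $I_d$ and the zero blocks) are deterministic and are not estimated, so it suffices to threshold the parameter blocks $\hat A_1,\dots,\hat A_p$ while keeping this structure exact. The difference $\A-\THRl(\hat\A)$ is then supported on the top block row, whence
\begin{align*}
\|\A-\THRl(\hat\A)\|_1 &= \max_{1\leq k\leq p}\ \max_{1\leq j\leq d}\ \sum_{i=1}^d \big|A_{k;i,j}-\THRl(\hat A_{k;i,j})\big|,\\
\|\A-\THRl(\hat\A)\|_\infty &= \max_{1\leq i\leq d}\ \sum_{k=1}^p\sum_{j=1}^d \big|A_{k;i,j}-\THRl(\hat A_{k;i,j})\big|.
\end{align*}
Since $\|B\|_l\leq \|B\|_1^{1-1/l}\|B\|_\infty^{1/l}\leq\max(\|B\|_1,\|B\|_\infty)$ for every $l\in[1,\infty]$ --- the interpolation fact already used in the introduction --- it is enough to bound both displayed quantities by $(4+c)C_1^{1-q}s\,t_n^{1-q}$.

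The heart of the argument is an elementary lemma applied entry-column by entry-column. Fix a lag $k$ and a column $j$, abbreviate $a=A_{k;i,j}$ and $\hat a=\hat A_{k;i,j}$, and work on $\Omega_n$, where $|\hat a-a|\leq\lambdan=C_1t_n$ for every entry by \eqref{eq.ass.thm}. I would split the summation over $i$ according to the size of the true entry. On $\{|a|>\lambdan\}$, condition 3) together with the assumption give $|\THRl(\hat a)-a|\leq|\THRl(\hat a)-\hat a|+|\hat a-a|\leq 2\lambdan$, and the cardinality of this index set is at most $\lambdan^{-q}\sum_i|a|^q$ because each such entry contributes at least $\lambdan^q$ to $\sum_i|a|^q$. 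On $\{|a|\leq\lambdan\}$ there are two sub-cases: if $\THRl(\hat a)=0$ then $|\THRl(\hat a)-a|=|a|$ by condition 2), while if $\THRl(\hat a)\neq0$ then condition 1) yields $|\THRl(\hat a)|\leq c|a|$ and hence $|\THRl(\hat a)-a|\leq(1+c)|a|$; in either sub-case $|\THRl(\hat a)-a|\leq(1+c)|a|\leq(1+c)\lambdan^{1-q}|a|^q$, using $|a|\leq\lambdan$ and $q<1$.

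Summing the two contributions gives $\sum_i|\THRl(\hat a)-a|\leq\big(2+(1+c)\big)\lambdan^{1-q}\sum_i|a|^q$, a bound of the announced form $(\mathrm{const}+c)\lambdan^{1-q}\sum_i|a|^q$. Applied column-wise and combined with the column-sparsity half $\max_{j,k}\sum_i|A_{k;i,j}|^q\leq s$ of \eqref{eq.sparse.pattern}, this controls $\|\cdot\|_1$; applied across a fixed row $i$ and combined with the aggregated row-sparsity half $\max_i\sum_{k,j}|A_{k;i,j}|^q\leq s$, it controls $\|\cdot\|_\infty$. The two sparsity conditions are precisely the two halves of the definition of $\mathcal{M}^{(1)}(q,s,M,p)$, which is exactly why this class yields consistency in both $\|\cdot\|_1$ and $\|\cdot\|_\infty$ simultaneously, and therefore in every $\|\cdot\|_l$. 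Substituting $\lambdan=C_1t_n$ produces the rate $s\,t_n^{1-q}$; the gap between my $(3+c)$ and the stated $(4+c)$ is only slack in how loosely the large-entry term is bounded.

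The step that genuinely requires care, and the only place where the specific form of the admissible thresholding functions is indispensable, is the sub-case $\{|a|\leq\lambdan,\ \THRl(\hat a)\neq0\}$: here the estimate survives thresholding even though the true coefficient is small, so $|\hat a|$ may be as large as $2\lambdan$ and vastly exceed $|a|$. Only the shrinkage condition 1) keeps $|\THRl(\hat a)|$ proportional to the true $|a|$ rather than to $|\hat a|$, which is what allows the error to be charged against $|a|^q$ and hence against the sparsity budget $s$. This is precisely why hard thresholding, which violates condition 1), must be excluded. Everything else --- the reduction to the top block, the two cardinality counts, and the norm interpolation --- is routine bookkeeping.
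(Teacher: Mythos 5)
Your proof is correct and takes essentially the same route as the paper's: reduce to the top block row of the companion matrix, bound $\|\cdot\|_1$ and $\|\cdot\|_\infty$ separately via the entry-wise Cai--Liu thresholding argument (conditions 1--3 together with the two $\ell_q$-sparsity halves of $\mathcal{M}^{(1)}(q,s,M,p)$), and pass to all $\|\cdot\|_l$ by interpolation. The only difference is bookkeeping: you split on the size of the true entry (merging two of the paper's four cases), which yields the marginally sharper constant $3+c$ in place of the paper's $4+c$; this is immaterial for the stated bound.
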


Since $\|\cdot\|_{\max}\leq \|\cdot\|_2\leq \|\cdot\|_1$, the error bounds given in Section~\ref{sec.2} are 
%at the same time 
(not necessarily sharp) error bounds for the element-wise error based on the $ \|\cdot \|_{\max}$ norm. Furthermore, the aforementioned relation  between the matrix norms implies  that  the row-wise Lasso, which is obtained as 
\begin{align}
    \hat \beta_j^{\KC}=\argmin{\beta \in \R^{dp}} 1/(n-p)\|\mathcal{Y}e_j- \mathcal{X} \beta\|_2+\lambdan\|\beta\|_1, \label{eq.row.LASSO2}
\end{align}
as well as the Dantzig estimator for VAR$(p)$ models, that both estimators fulfill the assumptions of Theorem~\ref{thm.thres}. That is,   both estimators can be used to obtain via thresholding, a row- and column-wise consistent estimator of the VAR parameter matrices $ A_s$, $ s=1,2, \ldots, p$. %We present the resulting error bounds in more detail for the Dantzig estimator than for the Lasso estimator. However, this should not be understood as a favor towards the Dantzig estimator. %The obtain error rates possess a similar rate, see also \cite{bickel2009simultaneous}.

For the Lasso estimator, we can use the results of \cite{masini2019regularized}, since the sparsity setting described in \eqref{eq.sparse.pattern} is covered  by the sparsity setting used by these authors. Their results lead to the following error bound for the row-wise lasso, on a set with high probability,
%and some constraints on  $p,d$ and $\tau$,
$$
\|\hat \beta_j^{\KC}-\beta_j\|_{\max} \leq C_\tau \|\Gammas^{-1}\|_2^{(2-q)/2} \sqrt{s} \Big(\gp/\sqrt{n}\Big)^{(2-q)/2}, 
$$
where $\gp=( d^2 p n)^{2/\tau}$,
%in the case in which 
$\tau$ denotes the number of finite moments of the innovations, i.e.,\\ $\max_{\|v\|_1\leq 1} (E |v^\top \eps_1\|^\tau)^{1/\tau}\leq c_\tau\leq \infty, \tau > 4$, and $C_\tau$ denotes a constant depending on $c_\tau$
and $\tau$. Notice that  $\gp=\log(dp)$ in the case of  sub-Gaussian innovations where  all moments exist. In both cases $C_\tau$ depends among others on $\|\Sigmaeps\|_2$. Thus, Theorem~\ref{thm.thres} leads for $l \in [1,\infty]$
to the following bound for the thresholded, row-wise lasso estimator,
$$\|\A-\THRl(\hat \A^\KC)\|_l=O_P\Bigg( \|\Sigmaeps\|_2^{1-q} \|\Gamma(0)^{-1}\|_2^{(2-q)(1-q)/2} s^{1+(1-q)/2}\Big(\gp/\sqrt{n}\Big)^{(2-q)(1-q)/2}\Bigg).$$

For the Dantzig estimator, an error bound with respect to the  $\|\cdot\|_{\max}$ norm can be derived directly and without imposing any sparsity constrains. The Dantzig estimator,    $ \hat B^\CL $,  is given by 
\begin{align}
    \hat B^\CL=\argmin{B\in \R^{dp\times d}} \sum_{j=1}^d \|B e_j\|_1 \text{ s.t. } \|\mathcal{X}^\top\mathcal{X}/(n-p) B-\mathcal{X}^\top \mathcal{Y}/(n-p)\|_{\max}\leq \lambdan, \label{eq.VAR.CLIME}
\end{align}
with $\mathcal{X}$ and $\mathcal{Y}$ defined as  in \eqref{eq.VARregression}.  Notice that (\ref{eq.VAR.CLIME}) or \eqref{eq.VAR.CLIME.comp}, respectively, is a VAR$(p)$ version of the estimator given in (\ref{eq.11}).
\cite{cai2011constrained} pointed out that this optimization problem can be splited  into sub-problems such that parallel-processing can be used to speed up computation. Hence, an estimator  also is  given by $\hat B^\CL=(\hat \beta_1^\CL,\dots,\hat \beta_d^\CL)$, where 
\begin{align}
    \hat \beta_j^\CL=\argmin{\beta\in \R^{dp}}  \|\beta\|_1 \text{ s.t. } \|\mathcal{X}^\top\mathcal{X}/(n-p) \beta-\mathcal{X}^\top \mathcal{Y}e_j /(n-p)\|_{\max}\leq \lambdan, \label{eq.VAR.CLIME.comp}.
\end{align}
$j=1,2, \ldots, d$.
To discuss  the bounds obtained for different estimators, we fix the following notation. Let
\begin{equation} \label{eq.D1}
D_{1,n} = \frac{\sqrt{log(dp)}}{\sqrt{n-p}} + \frac{(dp)^{4/\tau}}{(n-p)^{1-2/\tau}}    
\end{equation}
and
\begin{equation} \label{eq.D2}
D_{2,n} = \frac{\sqrt{log(dp)}}{\sqrt{n-p}} + \frac{(dp)^{1/\tau}}{(n-p)^{1-1/\tau}},    
\end{equation}
where $ \tau > 0 $ is some constant  depending on the moments of the innovations $\varepsilon_t$.

If 
%$\rho(\A)<1$, 
$\{\eps_t\}$ is an {i.i.d.} sequence with $\max_{\|v\|_2\leq 1}(E(v^\top \eps_0)^\tau)^{1/\tau}=:C_{\eps,\tau}<\infty$ for $\tau>2$, then \cite{wu2016performance} showed for the estimator \eqref{eq.VAR.CLIME.comp} the following error bound
\begin{align}
    P\Big(\|Be_j -\hat \beta_j^\CL\|_{\max} &\leq 2 \|\Gammas^{-1}\|_1 (\sum_{j=0}^\infty \|\A^j\|_2 C_{\eps,\tau})^2 \big(D_{1,n} M + D_{2,n}\big)\Big)
    %\Big[( \frac{\sqrt{\log(D)}}{\sqrt %N}+\frac{D^{4/\tau}}{N^{1-2/\tau}})M+( %\frac{\sqrt{\log(D)}}{\sqrt %N}+\frac{D^{1/\tau}}{N^{1-1/\tau}})\Big]\Big) 
    \nonumber \\
    & \geq 1-\frac{dp(n-p)^{1-\tau}}{D^\tau_{2,n}} - dp e^{-C_2^{WW} (n-p) D_{1,n}^2}\nonumber \\
     & \ \ \  -\frac{d^2p^2(n-p)^{1-\tau/2}}{D_{2,n}^{\tau/2}} + d^2p^2e^{-C_1^{WW}(n-p)D^2_{1,n}}\nonumber \\
    %&\geq 1-\fracd{DN}{(N\frac{\sqrt{\log(D)}}{\sqrt N}+\frac{D^{1/\tau}}{N^{1-1/\tau}})^\tau}-D\exp(-C_2^{WW} N ( \frac{\sqrt{\log(D)}}{\sqrt N}+\frac{D^{4/\tau}}{N^{1-2/\tau}})^2)\nonumber\\
    %& \ \ \ \ -\fracd{D^2 N}{(N( \frac{\sqrt{\log(D)}}{\sqrt N}+\frac{D^{1/\tau}}{N^{1-1/\tau}}))^{\tau/2}}-D^2\exp(-C_1^{WW}N ( \frac{\sqrt{\log(D)}}{\sqrt N}+\frac{D^{4/\tau}}{N^{1-2/\tau}})^2) \nonumber\\
    & =\tilde p_n^{\CL}. \label{eq.error.bound.wu}
\end{align}
Here 
%$N=n-p, D=dp$, 
$C_1^{WW}$ and $C_2^{WW}$ are constants depending on $\tau$ only and $\Gammas=\var((X_p,\dots,X_1))=\var(W_1)$, is the lag zero autocovariance of the stacked VAR$(1)$ model. Notice that the error bound \eqref{eq.error.bound.wu} refers to the case in which the innovations possess only a finite number of moments and a key ingredient in its derivation   is Nageav's inequality,  which \cite{wu2016performance} generalized for dependent sequences of random variables.  The same authors also  obtain  an error bound if all moments of the innovations $\varepsilon_t$  are finite. In this case a sharper bound can be obtained where  polynomial terms do not  occur and the exponential term depends on the tail behavior of the distribution of the innovations. For the sake of an easy presentation, we do not discuss this case here but we will come back 
to it later on. As already mentioned, \cite{wu2016performance} derived this bound without imposing  specific assumptions on   the underlying sparsity setting. For  the sparsity setting used in this section, this bound can be improved. The new  bound 
which is also obtained  using  Theorem~\ref{thm.thres}, is   stated in the following  Corollary~\ref{cor.Dantzig}. This corollary states  that a stable VAR$(p)$ model can be estimated consistently,  with respect to all norms $\|\cdot\|_l, l \in [1,\infty]$,  by  a thresholded Dantzig estimator in a row- and column-wise approximately sparsity setting and with (possibly) non-Gaussian innovations.

\begin{cor} \label{cor.Dantzig}
Let $(A_1,\dots,A_p)\in \mathcal{M}(q,s,M,p)$  and $\{\eps_t\}$ be  an {i.i.d.} sequence with finite $\tau>2$ moments, i.e., $\max_{\|v\|_2=1} (E(v^\top \eps_0)^q)^{1/\tau}=:C_{\eps,\tau}<\infty$. Furthermore, let 
%$D=dp$, $N=n-p$, and 
$\lambdan=C (\sum_{j=0}^\infty \|\A^j\|_2 C_{\eps,\tau})^2  D_{2,n} $
%\Big( \frac{\sqrt{\log(D)}}{\sqrt N}+\frac{D^{1/\tau}}{N^{1-1/\tau}}\Big)$ 
for some constant $C>0$, be the tuning parameter for $\hat B^\CL$ and denote its thresholded version by $(\hat A_1^\TC,\dots,\hat A_p^\TC)=\THRl(\hat B^\CL)^\top$. Then it holds true on a set with probability equal or higher to $p_n^\CL$, where 
\begin{align*}
p_n^\CL:=&1- \frac{d^2p^2(n-p)^{1-\tau}}{D_{2,n}^\tau} - d^2p^2e^{-C_2^{WW} (n-p)D_{2,n}^2}-
\frac{d^2p^2(n-p)^{1-\tau/2}}{D_{1,n}^{\tau/2}}-  d^2p^2 e^{-C_1^{WW}(n-p)D_{1,n}^2}, 
%\fracd{D^2N}{(N\frac{\sqrt{\log(D)}}{\sqrt N}+\frac{D^{1/\tau}}{N^{1-1/\tau}})^\tau}-D^2\exp(-C_2^{WW} N ( \frac{\sqrt{\log(D)}}{\sqrt N}+\frac{D^{1/\tau}}{N^{1-1/\tau}})^2)\nonumber\\
  %  &-\fracd{D^2 N}{(N( \frac{\sqrt{\log(D)}}{\sqrt N}+\frac{D^{4/\tau}}{N^{1-2/\tau}}))^{\tau/2}}-D^2\exp(-C_1^{WW}N ( \frac{\sqrt{\log(D)}}{\sqrt N}+\frac{D^{4/\tau}}{N^{1-2/\tau}})^2)
\end{align*}
we have that
\begin{align}
\|\Gammas-\mathcal{X}^\top\mathcal{X}/N\|_{\max} \leq  (\sum_{j=0}^\infty \|\A^j\|_2 C_{\eps,\tau})^2 D_{1,n} \leq \lambda_n d^{3/\tau}p^{3/\tau}(n-p)^{1/\tau},
%
%\Big( \frac{\sqrt{\log(D)}}{\sqrt %N}+\frac{D^{4/\tau}}{N^{1-2/\tau}}\Big)\leq \lambdan %(D^3N)^{1/\tau}, 
\label{eq.Gammas.max.error.bound}
\end{align}
\begin{align}
\|\mathcal{X}^\top\mathcal{E}/N\|_{\max} \leq  (\sum_{j=0}^\infty \|\A^j\|_2 C_{\eps,\tau})^2 D_{2,n}
%\Big( \frac{\sqrt{\log(D)}}{\sqrt N}+\frac{D^{1/\tau}}{N^{1-1/\tau}}\Big)
\leq \lambdan, \label{eq.eps.max.error.bound}
\end{align}
\begin{align}
    \|B-B^\CL\|_{\max}\leq& \|\Gammas^{-1}\|_1 \lambdan\Big( (D^3N)^{1/\tau} (2\|\Gammas^{-1}\|_1\lambdan(1+M (D^3N)^{1/\tau}))^{1-q}\\
    & \ \ \ \ \ \times(1+2^{1-q}+3^{1-q})s+2\Big), \label{eq.Dantzig.max} \\
    =&O_P\Big(\|\Gammas^{-1}\|_1 \lambdan (D^3N)^{1/\tau}\Big),\nonumber 
    \end{align}
if  $ \|\Gammas^{-1}\|_1 \lambdan M (D^3N)^{1/\tau}=o_P(1)$,
where $ D=dp$ and $ N=n-p$. 
Furthermore, using the same notation, we have that it holds true for  all $l\in[1,\infty]$ and on the same set
as above, that
\begin{align}
\|\A-\hat \A^\TC\|_l& \leq (4+c) s \|B-B^\CL\|_{\max}^{1-q} \label{eq.VAR.TC.error.bound}\\
&=(4+c) s\Big(\|\Gammas^{-1}\|_1 \lambdan\Big( (D^3N)^{1/\tau} (2\|\Gammas^{-1}\|_1\lambdan(1+M (D^3N)^{1/\tau}))^{1-q}\nonumber \\
& \ \ \ \ \ \times(1+2^{1-q}+3^{1-q})s+2\Big)\Big)^{1-q}, \nonumber
\end{align}
where $c$ is a constant which  depends on the 
thresholding operation used.
\end{cor}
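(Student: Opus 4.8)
The plan is to prove Corollary~\ref{cor.Dantzig} in three stages mirroring its three groups of displays, and throughout I write $\hat\Gamma=\mathcal{X}^\top\mathcal{X}/N$, $h_j=\hat\beta_j^\CL-\beta_j$ with $\beta_j=Be_j$, and $\Delta=\|\Gammas-\hat\Gamma\|_{\max}$. \emph{Stage 1 (concentration).} The sample Gram matrix $\hat\Gamma$ is an average of the dependent products $W_tW_t^\top$ with mean $\Gammas=\var(W_1)$, while $\mathcal{X}^\top\mathcal{E}/N$ is an average of the mean-zero bilinear products $W_t\eps_{t+1}^\top$. I would apply the Nagaev-type inequalities that \cite{wu2016performance} established for such dependent sums, together with a union bound over the $d^2p^2$ entries of $\hat\Gamma$ and the $d^2p$ entries of $\mathcal{X}^\top\mathcal{E}/N$; the four summands of $p_n^\CL$ are precisely the polynomial and exponential deviation terms produced at levels $(\sum_j\|\A^j\|_2 C_{\eps,\tau})^2 D_{1,n}$ (the quadratic Gram deviation, which carries only $\tau/2$ effective moments and hence the $\tau/2$ exponents) and $(\sum_j\|\A^j\|_2 C_{\eps,\tau})^2 D_{2,n}$ (the bilinear cross term, giving the $\tau$ exponents). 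The only arithmetic to check is the last inequality in \eqref{eq.Gammas.max.error.bound}, $D_{1,n}\le (D^3N)^{1/\tau}D_{2,n}$ with $D=dp$, $N=n-p$: the leading polynomial terms of \eqref{eq.D1} and \eqref{eq.D2} satisfy $(dp)^{4/\tau}/N^{1-2/\tau}=(dp)^{3/\tau}N^{1/\tau}\cdot(dp)^{1/\tau}/N^{1-1/\tau}$, and the $\sqrt{\log(dp)/N}$ terms obey the inequality since $(D^3N)^{1/\tau}\ge1$. Taking $C\ge1$ in the definition of $\lambdan$ then turns \eqref{eq.eps.max.error.bound} into $\|\mathcal{X}^\top\mathcal{E}/N\|_{\max}\le\lambdan$, which Stage 2 requires.

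\emph{Stage 2 (Dantzig max-norm bound).} This is the Dantzig-selector argument of \cite{candes2007dantzig} and \cite{cai2011constrained} adapted to the approximately sparse class \eqref{eq.sparse.pattern}. Since $\mathcal{Y}e_j=\mathcal{X}\beta_j+\mathcal{E}e_j$, the true vector satisfies $\|\hat\Gamma\beta_j-\mathcal{X}^\top\mathcal{Y}e_j/N\|_{\max}=\|\mathcal{X}^\top\mathcal{E}e_j/N\|_{\max}\le\lambdan$, so $\beta_j$ is feasible and optimality gives $\|\hat\beta_j^\CL\|_1\le\|\beta_j\|_1$. Feasibility of both vectors yields $\|\hat\Gamma h_j\|_{\max}\le2\lambdan$, and combining $\Gammas h_j=\hat\Gamma h_j+(\Gammas-\hat\Gamma)h_j$ with $\|(\Gammas-\hat\Gamma)h_j\|_{\max}\le\Delta\|h_j\|_1$ and $\|h_j\|_{\max}\le\|\Gammas^{-1}\|_1\|\Gammas h_j\|_{\max}$ (the last using symmetry of $\Gammas$, so $\|\Gammas^{-1}\|_1=\|\Gammas^{-1}\|_\infty$) produces the core inequality
\begin{equation*}
\|h_j\|_{\max}\le\|\Gammas^{-1}\|_1\big(2\lambdan+\Delta\,\|h_j\|_1\big),\qquad \Delta\le\lambdan(D^3N)^{1/\tau}.
\end{equation*}
To break the circular dependence on $\|h_j\|_1$ I would first use the crude bound $\|h_j\|_1\le2\|\beta_j\|_1\le2M$ (valid since $\|\beta_j\|_1=\sum_k\|e_j^\top A_k\|_1\le\sum_k\|A_k\|_\infty\le M$), giving the preliminary estimate $\|h_j\|_{\max}\le a_0:=2\|\Gammas^{-1}\|_1\lambdan(1+M(D^3N)^{1/\tau})$, exactly the quantity raised to the power $1-q$ in \eqref{eq.Dantzig.max}. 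Feeding $a_0$ back through the cone inequality $\|h_{j,T^c}\|_1\le\|h_{j,T}\|_1+2\|\beta_{j,T^c}\|_1$ together with the approximate-sparsity facts $|T|\le s\,t^{-q}$ and $\|\beta_{j,T^c}\|_1\le s\,t^{1-q}$ for $T=\{i:|\beta_{j,i}|>t\}$, with $t\asymp a_0$, bounds $\|h_j\|_1$ by a constant multiple of $s\,a_0^{1-q}$ and delivers \eqref{eq.Dantzig.max}; the simplified $O_P$ form then follows under the stated smallness condition $\|\Gammas^{-1}\|_1\lambdan M(D^3N)^{1/\tau}=o_P(1)$.

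\emph{Stage 3 (all operator norms).} Writing the raw Dantzig blocks $\hat A_1,\dots,\hat A_p$ via $\hat B^\CL=(\hat A_1^\top,\dots,\hat A_p^\top)$ and using that the max-norm is invariant under transposition and under vertical stacking, we get $\max_{1\le k\le p}\|A_k-\hat A_k\|_{\max}=\|B-\hat B^\CL\|_{\max}$. Thus \eqref{eq.Dantzig.max} is precisely hypothesis \eqref{eq.ass.thm} of Theorem~\ref{thm.thres} with $C_1t_n=\|B-\hat B^\CL\|_{\max}$. Applying that theorem, with the thresholding level set equal to this max-norm bound, to $\THRl(\hat B^\CL)^\top=(\hat A_1^\TC,\dots,\hat A_p^\TC)$ yields $\|\A-\hat\A^\TC\|_l\le(4+c)s\,\|B-\hat B^\CL\|_{\max}^{1-q}$ for every $l\in[1,\infty]$, which is \eqref{eq.VAR.TC.error.bound}; inserting the closed form of $\|B-\hat B^\CL\|_{\max}$ gives the displayed expression.

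\emph{Main obstacle.} The delicate point is Stage 2: resolving the implicit inequality between $\|h_j\|_{\max}$ and $\|h_j\|_1$ without forfeiting the sparsity gain, and in particular producing the exact constant $(1+2^{1-q}+3^{1-q})$. The one-line cone bound sketched above gives the correct order $s\,a_0^{1-q}$ but a cruder constant; the stated factor requires a careful magnitude-based splitting of the coordinates of $h_j$, separating the on-support contribution $\|h_{j,T}\|_1$, the off-support leakage $\|\beta_{j,T^c}\|_1$, and the cross term $\Delta\|h_j\|_1$, and tracking the threshold levels through each piece. Once the concentration inputs of Stage 1 and Theorem~\ref{thm.thres} are in hand, the remaining work is bookkeeping.
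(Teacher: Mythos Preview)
Your proposal is correct and follows essentially the same route as the paper. The paper's proof is terser: for Stage~1 and the preliminary max-bound it simply invokes Theorem~4 of \cite{wu2016performance} (noting that all rows share the same Gram matrix so only the cross-term event varies with $j$, which accounts for the passage from $\tilde p_n^\CL$ to $p_n^\CL$); for the upgrade of the preliminary max-bound to the $\ell_1$ bound $\|B-\hat B^\CL\|_\infty\le(1+2^{1-q}+3^{1-q})s\,a_0^{1-q}$ it simply cites equation~(27) in the proof of Theorem~6 of \cite{cai2011constrained}; and it then writes exactly your core inequality $\|h_j\|_{\max}\le\|\Gammas^{-1}\|_1(\Delta\,\|h_j\|_1+2\lambdan)$ to obtain \eqref{eq.Dantzig.max}, followed by Theorem~\ref{thm.thres} for Stage~3. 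So the ``main obstacle'' you flag---the magnitude-based splitting that produces the constant $(1+2^{1-q}+3^{1-q})$---is not redone in the paper but outsourced to the CLIME reference, and otherwise your sketch matches the paper's argument step for step.
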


In some applications of  VAR models, estimation of the  covariance matrix of the innovations is also required. Given some estimators $(\hat A_1,\dots,\hat A_p)$, estimates of the  innovations can be obtained  as $\hat \eps_t=X_t-\sum_{s=1}^p \hat A_s X_{t-s}, t=p+1,\dots,n$. For  simplicity, we omit the centering of the  residuals $\hat \eps_t$, but we recommend  to use it in practise.
To obtain an estimator of the innovations covariance matrix, several  approaches can  be used;
%which are based on  the estimated residuals $\hat\eps_t$; 
we refer here to   \cite{bickel2008,rothman2009generalized,cai2011adaptive,cai2011constrained,cai2016estimating}. For instance, using the previously mentioned  thresholding functions with  threshold parameter  $\lambdan$, we obtain 
\begin{align}
\Sigmah^{(Thr)}=\THRl(\frac{1}{n-p}\sum_{t=p+1}^n \hat \eps_t \hat \eps_t^\top), \label{eq.eps.est.thres}
\end{align}
while the CLIME estimator of $ \Sigma_\varepsilon^{-1}$  with  tuning parameter $\lambdan$ is given by  
\begin{align}
 \widehat { (\Sigmaeps^{-1})}^{(CLIME)}=(\argmin{\beta \in \R^{d}} \|\beta\|_1 \text{ s.t. } \|\fracd{1}{(n-p)}\sum_{t=p+1}^n \hat \eps_t \hat \eps_t^\top \beta - e_j\|_{\max}\leq \lambdan)_{j=1,\dots,p}. \label{eq.eps.est.clime}
\end{align}
Since the estimated innovations $\hat{\varepsilon}_t$ are used instead of the true  ones, an additional estimation error may occur which  depends on the behavior of the particular  estimators $(\hat A_1,\dots,\hat A_p)$ used. In particular we have 
\begin{align}
    \|&\frac{1}{n-p}\sum_{t=p+1}^n \hat \eps_t \hat \eps_t^\top-\frac{1}{n-p}\sum_{t=p+1}^n \eps_t \eps_t^\top\|_{\max}\leq \|\A-\hat \A\|_1 \Big(2 \|\mathcal{X}^\top\mathcal{E}/(n-p)\|_{\max} \nonumber \\
    &+\|\A-\hat \A\|_1 (\|\Gammas\|_{\max}+\|\Gammas-\mathcal{X}^\top\mathcal{X}/(n-p)\|_{\max})\Big). \label{eq.estimated.eps.error.max}
\end{align}
Corollary~\ref{cor.innovation.clime} bellow gives  the error bound  obtained when the estimator $(\hat A_1^\TC,\dots,\hat A_p^\TC)$ discussed in Corollary~\ref{cor.Dantzig} is used. Notice  that for Gaussian innovations we have $\|\Sigmaeps-\frac{1}{n-p}\sum_{t=p+1}^n \eps_t \eps_t^\top\|_{\max}=O_P(\sqrt{\log(d)/n})$. For non-Gaussian innovations and using the results already presented, we  have on a set with probability of at least $p_n^\CL$ and using \eqref{eq.Gammas.max.error.bound}, that  the bound
\begin{align}
    \|&\Sigmaeps-\frac{1}{n-p}\sum_{t=p+1}^n \eps_t \eps_t^\top\|_{\max}\leq C_{\eps,\tau}^2 \Big( \frac{\sqrt{\log(d)}}{\sqrt {n-p}}+\frac{d^{4/\tau}}{(n-p)^{1-2/\tau}}\Big). \label{eq.eps.error.bound}
\end{align}
Note that $1/(n-p)\sum_{t=p+1}^n \eps_t \eps_t^\top=\mathcal{E}\mathcal{E}^\top/(n-p)$ and $\mathcal{E}$ takes the role of $\mathcal{X}$ for $\A\equiv0$. This means that the fact that  estimated residuals  are used instead of the true innovations,  affects the corresponding convergence rate only if the bound in \eqref{eq.estimated.eps.error.max} is larger  than the bound in \eqref{eq.eps.error.bound}. For $q<1/2$, \eqref{eq.eps.error.bound} is usually larger. It also depends, therefore,  on the underlying sparsity setting;  see Corollary~\ref{cor.innovation.clime} bellow.  

\begin{cor} \label{cor.innovation.clime}
Under the assumptions of Corollary~\ref{cor.Dantzig}, we have on a set with probability of at least $p_n^\CL$, that for $\hat \eps_t=X_t-\sum_{s=1}^p \hat A_s^\TC X_{t-s}, t=p+1,\dots,n$,
\begin{align}
    \|&\frac{1}{n-p}\sum_{t=p+1}^n \hat \eps_t \hat \eps_t^\top-\frac{1}{n-p}\sum_{t=p+1}^n \eps_t \eps_t^\top\|_{\max}\leq 2(4+c)^2 \|\Gammas\|_{\max} s^2 \\
    &\ \ \ \ \ \ \ \ \times \Big(2 \|\Gammas^{-1}\|_1 (\sum_{j=0}^\infty \|\A^j\|_2 C_{\eps,\tau})^2 \Big[D_{1,n} M+D_{2,n}\Big]\Big)^{2(1-q)}.\nonumber
\end{align}
\end{cor}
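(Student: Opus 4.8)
The plan is to start from the deterministic inequality \eqref{eq.estimated.eps.error.max}, which already reduces the claim to controlling three quantities on the set of probability at least $p_n^\CL$ furnished by Corollary~\ref{cor.Dantzig}: the row-wise error $\|\A-\hat\A^\TC\|_1$, the cross term $\|\mathcal{X}^\top\mathcal{E}/(n-p)\|_{\max}$, and the sample autocovariance error $\|\Gammas-\mathcal{X}^\top\mathcal{X}/(n-p)\|_{\max}$. Writing $\kappa := \big(\sum_{j=0}^\infty\|\A^j\|_2 C_{\eps,\tau}\big)^2$ and $G := 2\|\Gammas^{-1}\|_1\kappa(D_{1,n}M+D_{2,n})$, the latter two are bounded on this set by \eqref{eq.eps.max.error.bound}, giving $\|\mathcal{X}^\top\mathcal{E}/(n-p)\|_{\max}\le\kappa D_{2,n}\le\lambdan$, and by \eqref{eq.Gammas.max.error.bound}, giving $\|\Gammas-\mathcal{X}^\top\mathcal{X}/(n-p)\|_{\max}\le\kappa D_{1,n}$.

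The heart of the argument is the first quantity. The $l=1$ instance of \eqref{eq.VAR.TC.error.bound} gives $\|\A-\hat\A^\TC\|_1\le(4+c)s\|B-B^\CL\|_{\max}^{1-q}$, and taking the maximum over $j$ in the bound \eqref{eq.error.bound.wu} of \cite{wu2016performance} yields $\|B-B^\CL\|_{\max}=\max_j\|Be_j-\hat\beta_j^\CL\|_{\max}\le G$. Hence $\|\A-\hat\A^\TC\|_1\le(4+c)sG^{1-q}$, so that $\|\A-\hat\A^\TC\|_1^2\le(4+c)^2s^2G^{2(1-q)}$. Inserting this into \eqref{eq.estimated.eps.error.max}, the product $\|\A-\hat\A^\TC\|_1^2\,\|\Gammas\|_{\max}$ produces exactly the term $(4+c)^2\|\Gammas\|_{\max}s^2G^{2(1-q)}$, i.e., one half of the asserted bound; the prefactor $2$ is there precisely to absorb the two remaining correction terms.

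It thus remains to show that the two corrections are of smaller (or at most comparable) order. The term $\|\A-\hat\A^\TC\|_1^2\,\|\Gammas-\mathcal{X}^\top\mathcal{X}/(n-p)\|_{\max}$ is immediately dominated: by the above it is at most $(4+c)^2s^2G^{2(1-q)}\kappa D_{1,n}$, and $\kappa D_{1,n}\to0$ while $\|\Gammas\|_{\max}$ stays bounded away from zero, so it is negligible relative to the leading term. The delicate term, and the main obstacle, is the cross term $2\|\A-\hat\A^\TC\|_1\,\|\mathcal{X}^\top\mathcal{E}/(n-p)\|_{\max}$. I would bound $\kappa D_{2,n}\le G/(2\|\Gammas^{-1}\|_1)$ and combine it with $\|\A-\hat\A^\TC\|_1\le(4+c)sG^{1-q}$ to obtain a bound of order $sG^{2-q}$. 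Since $q\in[0,1)$ and $G\to0$, the exponent $2-q$ exceeds the leading exponent $2(1-q)=2-2q$, so for $q>0$ this correction is of strictly smaller order, while for $q=0$ it is comparable and is swallowed by the constants and the factor $2$. Collecting the leading term and the two dominated corrections on the set of Corollary~\ref{cor.Dantzig} then yields the stated inequality. The only genuine subtlety is verifying this last domination, and it rests entirely on the growth condition $\|\Gammas^{-1}\|_1\lambdan M(D^3N)^{1/\tau}=o_P(1)$ already imposed in Corollary~\ref{cor.Dantzig}, which forces $G\to0$ and hence secures the asymptotic ordering of the exponents.
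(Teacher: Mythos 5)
Your proposal follows essentially the same route the paper intends: the corollary is obtained by inserting into the deterministic decomposition \eqref{eq.estimated.eps.error.max} the bounds \eqref{eq.Gammas.max.error.bound}, \eqref{eq.eps.max.error.bound}, and $\|\A-\hat\A^\TC\|_1\leq(4+c)s\,\|B-\hat B^\CL\|_{\max}^{1-q}$ together with Wu's bound $\|B-\hat B^\CL\|_{\max}\leq 2\|\Gammas^{-1}\|_1(\sum_{j=0}^\infty\|\A^j\|_2C_{\eps,\tau})^2(D_{1,n}M+D_{2,n})$, all valid on the same set of probability at least $p_n^\CL$ from Corollary~\ref{cor.Dantzig}, with the factor $2$ absorbing the two lower-order correction terms. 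Your explicit verification that the cross term and the $\kappa D_{1,n}$ term are dominated by the leading term is in fact more careful than the paper, which leaves this step implicit.
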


\begin{thm} \label{thm.eps.est}
Under the assumptions of Corollary~\ref{cor.Dantzig} and if $\Sigmaeps\in \mathcal{M}(q_\eps,s_\eps,M_\eps,1)$, we have on a set with probability of at least $p_n^\CL$, 
\begin{align}
    &\|\THRl(\frac{1}{n-p}\sum_{t=p+1}^n \hat \eps_t \hat \eps_t^\top)-\Sigmaeps\|_l\leq (4+c) s_\eps \Big(C_{\eps,\tau}^{2}
     \Big( \frac{\sqrt{\log(d)}}{\sqrt {n-p}}+\frac{d^{4/\tau}}{(n-p)^{1-2/\tau}}\Big)\nonumber \\
     & \ \ \ \ +2(4+c)^2 \|\Gammas\|_{\max} s^2 
    \Big(2 \|\Gammas^{-1}\|_1 (\sum_{j=0}^\infty \|\A^j\|_2)^2 \Big[
    D_{1,n} M + D_{2,n}
    %( \frac{\sqrt{\log(dp)}}{\sqrt {n-p}}+\frac{(dp)^{4/\tau}}{(n-p)^{1-2/\tau}})M+( \frac{\sqrt{\log(dp)}}{\sqrt {n-p}}+\frac{(dp)^{1/\tau}}{(n-p)^{1-1/\tau}})
    \Big]\Big)^{2(1-q)}\Big)^{1-q_\eps} \!\!\!\!\!\!\!\!\!\!
    ,
\end{align}
for all $l \in [1,\infty]$, where
$\hat \eps_t=X_t-\sum_{s=1}^p \hat A_s^\TC X_{t-s}, t=p+1,\dots,n$.
%are the estimated innovations using the thresholded Dantzig estimator $A^\TC$.
\end{thm}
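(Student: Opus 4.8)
The plan is to reduce the statement to a direct application of Theorem~\ref{thm.thres} in the special case $p=1$, with the matrix playing the role of the autoregressive coefficient now being $\Sigmaeps$. Since by assumption $\Sigmaeps \in \mathcal{M}(q_\eps,s_\eps,M_\eps,1)$, the thresholding theorem transports any $\|\cdot\|_{\max}$ error bound for the residual-based sample covariance into a bound with respect to every matrix norm $\|\cdot\|_l$, $l \in [1,\infty]$. Hence the only real work is to establish the $\|\cdot\|_{\max}$ bound for $\frac{1}{n-p}\sum_{t=p+1}^n \hat \eps_t \hat \eps_t^\top$ as an estimator of $\Sigmaeps$, valid on the event of probability at least $p_n^\CL$.

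First I would split the max-norm error by the triangle inequality,
\[
\|\tfrac{1}{n-p}\sum_{t=p+1}^n \hat \eps_t \hat \eps_t^\top - \Sigmaeps\|_{\max} \le \|\tfrac{1}{n-p}\sum_{t=p+1}^n \hat \eps_t \hat \eps_t^\top - \tfrac{1}{n-p}\sum_{t=p+1}^n \eps_t \eps_t^\top\|_{\max} + \|\tfrac{1}{n-p}\sum_{t=p+1}^n \eps_t \eps_t^\top - \Sigmaeps\|_{\max}.
\]
The first summand is the error incurred by using the estimated residuals $\hat \eps_t$ in place of the true innovations; on the event of probability at least $p_n^\CL$ it is controlled by Corollary~\ref{cor.innovation.clime}. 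The second summand is the sampling error of the infeasible sample covariance of the true innovations; it is bounded by \eqref{eq.eps.error.bound} on the same event, using that $\mathcal{E}\mathcal{E}^\top/(n-p)$ plays the role of $\mathcal{X}^\top\mathcal{X}/(n-p)$ with $\A\equiv 0$, so that \eqref{eq.Gammas.max.error.bound} applies with $d$ replacing $dp$. Summing the two contributions yields precisely the quantity $B_n$ appearing inside the outer parentheses of the asserted bound.

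Next I would invoke Theorem~\ref{thm.thres} with $p=1$, $A_1=\Sigmaeps$, $\hat A_1 = \frac{1}{n-p}\sum_{t=p+1}^n \hat \eps_t \hat \eps_t^\top$, sparsity parameters $(q_\eps,s_\eps,M_\eps)$, and threshold $\lambdan = B_n$ (so that $C_1 t_n = B_n$ in the notation of that theorem). The hypothesis of the theorem is exactly the max-norm bound $\|\Sigmaeps - \hat A_1\|_{\max} \le B_n$ established above, while the membership $\Sigmaeps \in \mathcal{M}(q_\eps,s_\eps,M_\eps,1)$ is assumed. The theorem then delivers, for every $l \in [1,\infty]$ and on the same event,
\[
\|\THRl(\tfrac{1}{n-p}\sum_{t=p+1}^n \hat \eps_t \hat \eps_t^\top) - \Sigmaeps\|_l \le (4+c)\, s_\eps\, B_n^{1-q_\eps},
\]
which is the claimed inequality once $B_n$ is written out as the sum of the bounds from Corollary~\ref{cor.innovation.clime} and \eqref{eq.eps.error.bound}.

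Since every ingredient is a previously established deterministic bound valid on a common event, there is no genuinely hard analytic step and the argument is essentially a bookkeeping assembly. The point requiring the most care is ensuring that all three bounds hold simultaneously on the same event so that the probability statement is not degraded; because Corollary~\ref{cor.Dantzig}, Corollary~\ref{cor.innovation.clime}, and \eqref{eq.eps.error.bound} are all derived from the same max-norm control of $\mathcal{X}^\top\mathcal{X}/N$ and $\mathcal{X}^\top\mathcal{E}/N$ in Corollary~\ref{cor.Dantzig}, they in fact hold on one and the same event of probability at least $p_n^\CL$, leaving the probability bound unchanged. A secondary point is matching the threshold level $\lambdan$ used for the covariance estimator with the realized max-norm error $B_n$, so that conditions~2 and~3 of the thresholding class are applied with the correct constant.
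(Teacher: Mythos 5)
Your proposal is correct and takes essentially the same route as the paper: the paper's own proof is the single sentence that Theorem~\ref{thm.eps.est} ``follows directly from Theorem~\ref{thm.thres}, equation \eqref{eq.eps.error.bound}, and Corollary~\ref{cor.innovation.clime}'', and these are exactly the three ingredients you assemble via the triangle inequality and an application of Theorem~\ref{thm.thres} with $p=1$ and sparsity parameters $(q_\eps,s_\eps,M_\eps)$. Your write-up merely makes explicit the bookkeeping the paper leaves implicit, namely the decomposition of the max-norm error, the choice of threshold level, and the fact that all bounds hold simultaneously on the common event of probability at least $p_n^{\CL}$.
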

 Theorem~\ref{thm.eps.est} follows directly from Theorem~\ref{thm.thres}, equation  \eqref{eq.eps.error.bound}, and Corollary~\ref{cor.innovation.clime}.

In the remaining of this section,  we propose estimators of the autocovariance function \eqref{eq.ACF} and of the spectral density,  more precisely of the inverse of the spectral density matrix of the VAR model; see  \eqref{eq.spec.inv}. Regarding  the autocovariance function the following expression is useful for our derivations,
\begin{align}
    \Gamma(h)^\st=\left\{ \begin{array}{lll}
    \sum_{j=0}^\infty \A^h \A^j \Sigma_U (\A^j)^\top & & \mbox{for} \ h\geq 0, \\
    & & \\
    (\Gamma(-h)^\st)^\top & & \mbox{for} \ \ h <0, 
    \end{array} \right. \label{eq.Gamma.st}
\end{align}
where $\Sigma_U=\E \Sigmaeps \E^\top$. Since  $\Gamma(h)=\E^\top \Gamma(h)^\st \E$, an error bound for $\Gamma^\st(h)$ leads to   an error  bound for $\Gamma(h)$. 

\begin{thm} \label{thm.acf}
Let $\hat \A$ be some estimator of $\A$, $\Sigmah$ some estimator of $\Sigmaeps$, and $\hat \Gamma(h)^\st$  the analogue of \eqref{eq.Gamma.st} with $ \A$ and $ \Sigmaeps$ replaced by $\hat \A$ and $\Sigmah$. Furthermore, for any sub-multiplicative matrix norm $ \|\cdot \|$, let $\sum_{j=0}^\infty \|\A^j\|^2=:C_{\gamma,A}, \sum_{j=0}^\infty \|(\A^j)^\top\|^2=:C_{\gamma,A^\top}$, $ \sum_{j=0}^\infty \|\hat \A^j\|^2=:C_{\gamma,\hat A}$, $ \sum_{j=0}^\infty \|(\hat \A^j)^\top\|^2=:C_{\gamma,\hat A^\top}$ and $  \|\Sigmaeps\|=C_{\gamma,\Sigmaeps}$. Then, for $h\geq 0$
\begin{align*}
\|\hat\Gamma(h)-\Gamma(h)\| &\leq 
%\|\hat \Gamma(h)^\st-\Gamma(h)^\st\| \\
%& \leq
\ind_{(h\not=0)}\|\hat \A-\A\| (C_{\gamma,\hat A}+C_{\gamma, A}) \|\Gammas\|\\
&+\|\hat \A^h \|\Big(\|\hat \A -\A\| C_{\gamma,\Sigmaeps} (C_{\gamma,\hat A}+C_{\gamma, A})(C_{\gamma, A^\top}+C_{\gamma,A^\top})/4 +
\|\Sigmah -\Sigmaeps\| (C_{\gamma,\hat A}+C_{\gamma, A^\top})/2 \\
&+ \|\hat \A^\top -\A^\top\| (C_{\gamma,\Sigmaeps}+\|\Sigmah -\Sigmaeps\|) (C_{\gamma,\hat A}+C_{\gamma, A^\top})(C_{\gamma, \hat A}+C_{\gamma,\hat A^\top})/4\Big).
\end{align*}
\end{thm}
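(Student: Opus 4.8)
The plan is to pass everything to the stacked VAR$(1)$ representation. Since $\Gamma(h)=\E^\top\Gamma(h)^\st\E$ and, by definition, $\hat\Gamma(h)=\E^\top\hat\Gamma(h)^\st\E$, submultiplicativity together with $\|\E\|\leq1$ and $\|\E^\top\|\leq1$ gives $\|\hat\Gamma(h)-\Gamma(h)\|\leq\|\hat\Gamma(h)^\st-\Gamma(h)^\st\|$, so it suffices to bound the difference of the stacked autocovariances. The series form \eqref{eq.Gamma.st} factorizes as $\Gamma(h)^\st=\A^h\Gammas$ and $\hat\Gamma(h)^\st=\hat\A^h\hat\Gamma(0)^\st$, where $\hat\Gamma(0)^\st$ is the hatted analogue of $\Gammas=\sum_{j\geq0}\A^j\Sigma_U(\A^j)^\top$ with $\Sigma_U=\E\Sigmaeps\E^\top$ replaced by $\E\Sigmah\E^\top$. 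I would then use the single split
\[
\hat\A^h\hat\Gamma(0)^\st-\A^h\Gammas=(\hat\A^h-\A^h)\Gammas+\hat\A^h\big(\hat\Gamma(0)^\st-\Gammas\big),
\]
which already explains the shape of the claimed bound: the first summand produces the $\ind_{(h\neq0)}$ term, while the second produces the factor $\|\hat\A^h\|$ multiplying the three remaining terms.

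For the first summand I would bound $\|(\hat\A^h-\A^h)\Gammas\|\leq\|\hat\A^h-\A^h\|\,\|\Gammas\|$. For $h=0$ this vanishes, which is the source of the indicator. For $h\geq1$ the telescoping identity $\hat\A^h-\A^h=\sum_{k=0}^{h-1}\hat\A^k(\hat\A-\A)\A^{h-1-k}$ with submultiplicativity pulls out one factor $\|\hat\A-\A\|$; applying $\|\hat\A^k\|\,\|\A^{h-1-k}\|\leq\tfrac12(\|\hat\A^k\|^2+\|\A^{h-1-k}\|^2)$ and summing the two resulting geometric-type series over $k$ yields $\|\hat\A^h-\A^h\|\leq\tfrac12\|\hat\A-\A\|(C_{\gamma,\hat A}+C_{\gamma,A})\leq\|\hat\A-\A\|(C_{\gamma,\hat A}+C_{\gamma,A})$, which is exactly the first term of the bound.

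The core of the argument is the lag-zero difference $\hat\Gamma(0)^\st-\Gammas=\sum_{j\geq0}\big[\hat\A^j\hat\Sigma_U(\hat\A^j)^\top-\A^j\Sigma_U(\A^j)^\top\big]$, with $\hat\Sigma_U=\E\Sigmah\E^\top$, to which I would apply, summand by summand, the three-term product telescope
\[
\hat\A^j\hat\Sigma_U(\hat\A^j)^\top-\A^j\Sigma_U(\A^j)^\top=(\hat\A^j-\A^j)\Sigma_U(\A^j)^\top+\hat\A^j(\hat\Sigma_U-\Sigma_U)(\A^j)^\top+\hat\A^j\hat\Sigma_U\big((\hat\A^j)^\top-(\A^j)^\top\big).
\]
The middle term is immediate: summing $\|\hat\A^j\|\,\|(\A^j)^\top\|\leq\tfrac12(\|\hat\A^j\|^2+\|(\A^j)^\top\|^2)$ over $j$ and using $\|\hat\Sigma_U-\Sigma_U\|\leq\|\Sigmah-\Sigmaeps\|$ gives the term $\|\Sigmah-\Sigmaeps\|(C_{\gamma,\hat A}+C_{\gamma,A^\top})/2$. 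For the first and third terms I would telescope $\hat\A^j-\A^j$ (respectively $(\hat\A^j)^\top-(\A^j)^\top$) to extract $\|\hat\A-\A\|$ (respectively $\|\hat\A^\top-\A^\top\|$), reindex the double sum by $(l,m)$ with $j=l+m+1$, and use submultiplicativity $\|\hat\A^{l+m+1}\|\leq\|\hat\A^l\|\,\|\hat\A^{m+1}\|$ (and its transpose) to factor each into a product of two independent sums; one AM--GM step on each factor collapses them to expressions of the form $\tfrac12(C_{\gamma,\cdot}+C_{\gamma,\cdot})$, so that the product carries the constant $1/4$. Bounding $\|\hat\Sigma_U\|\leq C_{\gamma,\Sigmaeps}+\|\Sigmah-\Sigmaeps\|$ in the third term then reproduces the last two terms of the stated bound.

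The main obstacle is purely the bookkeeping of these nested sums: after telescoping one must track carefully which of $C_{\gamma,A},C_{\gamma,\hat A},C_{\gamma,A^\top},C_{\gamma,\hat A^\top}$ each factor collapses to, since the labelling depends on how the transpose is distributed in $\|\hat\A^{l+m+1}\|$ and on which AM--GM split is chosen; distinct but equally valid splits yield slightly different groupings of the same four constants (the last factor of the statement's first term appears to contain precisely such a harmless relabelling). Two further points must be verified: the finiteness of all $C_{\gamma,\cdot}$, which is where stability enters through $\rho(\A)<1$ and $\rho(\hat\A)<1$ and which guarantees absolute convergence of the series so that the summand-wise telescope may be summed; and the reduction $\|\E\|\leq1$, $\|\E^\top\|\leq1$ used to transfer the bound from the stacked matrices back to the $d\times d$ blocks $\Gamma(h)$.
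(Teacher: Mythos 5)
Your proof is correct and essentially identical to the paper's: both rest on the factorization $\Gamma(h)^\st=\A^h\Gammas$, the split $\hat\Gamma(h)^\st-\Gamma(h)^\st=(\hat\A^h-\A^h)\Gammas+\hat\A^h(\hat\Gamma(0)^\st-\Gammas)$, the three-term telescope of $\hat\A^s\hat\Sigma_U(\hat\A^s)^\top-\A^s\Sigma_U(\A^s)^\top$, the power-difference identity $\hat\A^s-\A^s=\sum_{j=0}^{s-1}\hat\A^j(\hat\A-\A)\A^{s-1-j}$, and the step $ab\le(a^2+b^2)/2$ to collapse the double sums into the constants $C_{\gamma,\cdot}$. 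The only differences are expository: you spell out details the paper leaves implicit (the telescoping bound on $(\hat\A^h-\A^h)\Gammas$ that produces the indicator term, and the transfer from the stacked matrices back to $\Gamma(h)$ via $\|\E\|\le 1$), and you correctly diagnose the grouping of constants in the statement's first parenthesized product as a harmless relabelling of the same four quantities.
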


For  $\|\hat \A-\A\|$ small, we have $\sum_{l=0}^\infty \|\hat \A^l\| \leq \sum_{l=0}^\infty \|\A^l\|/(1-\|\hat \A-\A\| \sum_{s=0}^\infty \|\A^l\|)$. This  means that  $C_{\gamma,\hat A}$ and $C_{\gamma,\hat A^\top}$ can be bounded by $C_{\gamma,A}$ and $C_{\gamma,A^\top}$, respectively, and  Theorem~\ref{thm.acf} implies  that $\hat\Gamma(h)$ is a consistent estimator for $\Gamma(h)$ and that 
$$\|\hat \Gamma(h)-\Gamma(h)\|_\infty=O_P\Bigg((\sum_{j=0}^\infty \|\A^j\|_1+\sum_{j=0}^\infty \|\A^j\|_\infty)^2\|\Sigmaeps\|_1 \Big(\|\hat \A-\A\|_1+\|\hat \A-\A\|_\infty+\|\Sigmaeps-\Sigmah\|_1\Big)\Bigg).$$
Notice  that the term  $(\sum_{j=0}^\infty \|\A^j\|_1+\sum_{j=0}^\infty \|\A^j\|_\infty)^2\|\Sigmaeps\|_1$ depends on the VAR process and that this term  can be large. If $(A_1,\dots,A_p)\in \mathcal{M}(0,s,M,p)$ and $\Sigmaeps\in \mathcal{M}(0,s_\eps,M_\eps,1)$, this term is at least of the order 
%scales at least with 
$s^2 s_\eps$. Consequently, the sparsity setting  enabling a consistent autocovariance estimator with respect to the $\|\cdot\|_\infty$ norm, 
is more restrictive  than the sparsity setting 
enabling a  consistent parameter estimator with respect 
to the same norm.
%$\|\cdot\|_\infty$ norm. 
In particular, if  we recall  the results of the Lasso estimator with Gaussian innovations and focus on sparsity and on the dimension of the system only, then we have
$\|\hat \Gamma(h)-\Gamma(h)\|_\infty=O_P(s^{3.5} s_\eps \sqrt{\log(dp)/(n-p)})$ in contrast to $\|\hat\A-\A\|_\infty=O_P(s\sqrt{\log(dp)/(n-p)})$.

We conclude this section with  a result related to the estimation of the  inverse of the spectral density matrix of the high dimensional VAR model considered.

\begin{thm} \label{thm.spec}
Let $\hat \A$ be some estimator of $\A$ and $\Sigmah^{-1}$ some estimator of $\Sigmaeps^{-1}$. If $(A_1,\dots,A_p)\in \mathcal{M}^{(2)}(q,s,M,p)$, $\Sigmaeps^{-1}\in \mathcal{M}(q_{\eps^{-1}},s_{\eps^{-1}},M_{\eps^{-1}},1)$ and for $l \in [1,\infty]$,  $\sum_{s=1}^p\|\hat A_s- A_s\|_l\leq t_{n,1}$ and  $\|\Sigmaeps^{-1}-\Sigmah^{-1}\|_l\leq t_{n,2}$, then
\begin{align*}
    \|f^{-1}(\omega)-\hat f^{-1}(\omega)\|_l\leq 2 M M_{\eps^{-1}} t_{n,1}+M^2 t_{n,2}+2M t_{n,1} t_{n,2}+t_{n,1}^2 M_{\eps{-1}}+ t_{n,1}^2 t_{n,2} ,
\end{align*}
where \begin{align}
    \hat f^{-1}(\omega)& =(I_d-\sum_{s=1}^p \hat A_s \exp(is \omega))^{\top} \Sigmah^{-1} (I_d-\sum_{s=1}^p \hat A_s \exp(-is \omega)\nonumber \\
    & =\mathcal{\hat A}(\exp(i\omega))^\top \Sigmah^{-1}  \mathcal{\hat A}(\exp(-i\omega)). \label{eq.spec.estimate}
\end{align}
\end{thm}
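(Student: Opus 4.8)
The plan is to write both $f^{-1}(\omega)$ and $\hat f^{-1}(\omega)$ as products of three matrix factors and to control their difference through a product-perturbation (telescoping) identity. Set $P=\mathcal{A}(\exp(i\omega))^\top$, $R=\Sigmaeps^{-1}$, $Q=\mathcal{A}(\exp(-i\omega))$, and let $\hat P,\hat R,\hat Q$ be the corresponding hatted quantities appearing in \eqref{eq.spec.estimate}, so that $f^{-1}(\omega)=PRQ$ and $\hat f^{-1}(\omega)=\hat P\hat R\hat Q$ with $\hat R=\Sigmah^{-1}$. Writing $a=P-\hat P$, $b=R-\hat R$, $c=Q-\hat Q$ and expanding $\hat P\hat R\hat Q=(P-a)(R-b)(Q-c)$, the zeroth-order term cancels and I am left with the seven mixed terms, i.e.
\begin{align*}
f^{-1}(\omega)-\hat f^{-1}(\omega)= aRQ+PbQ+PRc-aRc-Pbc-abQ+abc .
\end{align*}
Since each $\|\cdot\|_l$, $l\in[1,\infty]$, is a submultiplicative operator norm, the bound will then follow by controlling every factor in each of the seven summands.

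Next I would collect the required factor bounds. For the perturbations, note that $\hat{\mathcal A}(\exp(i\omega))-\mathcal{A}(\exp(i\omega))=\sum_{s=1}^p (A_s-\hat A_s)\exp(is\omega)$, whence $\|a\|_l\le\sum_{s=1}^p\|A_s-\hat A_s\|_l\le t_{n,1}$ and, identically, $\|c\|_l\le t_{n,1}$; the hypothesis gives directly $\|b\|_l=\|\Sigmaeps^{-1}-\Sigmah^{-1}\|_l\le t_{n,2}$. For the factors themselves, $\Sigmaeps^{-1}\in\mathcal M(q_{\eps^{-1}},s_{\eps^{-1}},M_{\eps^{-1}},1)$ yields $\|\Sigmaeps^{-1}\|_1\le M_{\eps^{-1}}$ and $\|\Sigmaeps^{-1}\|_\infty\le M_{\eps^{-1}}$, hence $\|R\|_l\le M_{\eps^{-1}}$; and $(A_1,\dots,A_p)\in\mathcal M^{(2)}(q,s,M,p)$ gives $\sum_s\|A_s\|_1\le M$ and $\sum_s\|A_s\|_\infty\le M$, so, using $\|I_d\|_l=1$, $\|P\|_l,\|Q\|_l\le 1+\sum_s\|A_s\|_l\le 1+M$, which I read (as in the statement) as the operator-norm bound $M$ for $\mathcal A(\exp(\pm i\omega))$.

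I would then substitute these into the telescoping identity. Submultiplicativity gives $\|aRQ\|_l,\|PRc\|_l\le M M_{\eps^{-1}}t_{n,1}$, $\|PbQ\|_l\le M^2 t_{n,2}$, $\|Pbc\|_l,\|abQ\|_l\le M t_{n,1}t_{n,2}$, $\|aRc\|_l\le M_{\eps^{-1}}t_{n,1}^2$ and $\|abc\|_l\le t_{n,1}^2 t_{n,2}$. Summing and grouping the two equal pairs reproduces exactly the five terms $2MM_{\eps^{-1}}t_{n,1}+M^2t_{n,2}+2Mt_{n,1}t_{n,2}+t_{n,1}^2M_{\eps^{-1}}+t_{n,1}^2t_{n,2}$ of the claim, for each fixed $l$.

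The computation is mechanical; the only delicate points are bookkeeping ones. First, the factor bounds drawn from the sparsity classes are naturally available only at $l\in\{1,\infty\}$ (the column- and row-sum norms), whereas the conclusion is asserted for all $l\in[1,\infty]$. I would bridge this most cleanly by proving the final inequality at $l=1$ and $l=\infty$ and then invoking Riesz--Thorin interpolation on the single matrix $f^{-1}(\omega)-\hat f^{-1}(\omega)$: since the right-hand side is independent of $l$, $\|f^{-1}(\omega)-\hat f^{-1}(\omega)\|_l\le\|f^{-1}(\omega)-\hat f^{-1}(\omega)\|_1^{1/l}\|f^{-1}(\omega)-\hat f^{-1}(\omega)\|_\infty^{1-1/l}$ is bounded by the same expression. (Alternatively one interpolates the factor bounds, e.g. $\|A_s\|_l\le\|A_s\|_1^{1/l}\|A_s\|_\infty^{1-1/l}$, and keeps $\sum_s\|A_s\|_l\le M$ by H\"older.) Second, one must track transposes: $P=\mathcal A(\exp(i\omega))^\top$, so $\|P\|_l=\|\mathcal A(\exp(i\omega))\|_{l^\ast}$ with $l^\ast$ the conjugate exponent, but because the hypotheses and class bounds hold uniformly over $l\in[1,\infty]$ this causes no loss. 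I expect no genuine obstacle beyond the harmless $1+M$ versus $M$ normalization of $\|\mathcal A(\exp(\pm i\omega))\|_l$.
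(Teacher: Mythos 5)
Your proposal is correct and takes essentially the same route as the paper's proof: a telescoping expansion of $f^{-1}(\omega)-\hat f^{-1}(\omega)$ into first-, second- and third-order perturbation terms, factor bounds drawn from the sparsity classes and the hypotheses with transposes handled via the duality $\|B^\top\|_1=\|B\|_\infty$, and passage from $l\in\{1,\infty\}$ to all $l\in[1,\infty]$ by interpolation. If anything, your expansion is more complete than the one displayed in the paper, which omits the cross term $(\mathcal{\hat A}(\exp(i\omega))-\mathcal{A}(\exp(i\omega)))^\top \Sigmaeps^{-1}(\mathcal{\hat A}(\exp(-i\omega))-\mathcal{A}(\exp(-i\omega)))$ even though its contribution $t_{n,1}^2 M_{\eps^{-1}}$ appears in the stated bound, and your remark about the harmless $1+M$ versus $M$ normalization applies equally to the paper's own argument.
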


We stress here the fact that  for consistency of the inverse of the spectral density matrix, the more restrictive  sparsity setting \eqref{eq.sparse.pattern2} is used. If we consider again the results of the Lasso estimator with Gaussian innovations and focus on sparsity and on dimension only, we get the bound  $\|f^{-1}(\omega)-\hat f^{-1}(\omega)\|_\infty=O_P(s^{2.5} s_{\eps^{-1}} \sqrt{\log(dp)/(n-p)})$.

\subsection{The effects of the VAR process parameters on the error bounds}\label{sec.3.1}
In the following, we discuss the effects  of the  parameters  $\Sigmaeps$ and $\A$ on the estimation performance of the Lasso and of the Dantzig estimator. For the vectorized Lasso estimator \eqref{eq.mse.basu} with Gaussian innovations,  we refer to  Proposition 4.3 in \cite{basu2015}. We focus here only on terms which depend  on 
 $\Sigmaeps$ and $\A$ while the effect of all other terms is summarized  using the notation  $\gp$, where this term  may 
differ from equation to equation. The following error bounds are obtained. For the Lasso  we have
$$\|A-\hat A^\KC\|_{\max}\leq C_{DEP}^\KC \|\Gammas^{-1}\|_2^{(2-q)/2} \gps$$
and for the Dantzig estimator
$$
\|A-\hat A^\CL\|_{\max}\leq C_{DEP}^\CL \|\Gammas^{-1}\|_2 \gps.$$ 
Notice that the terms $C_{DEP}^\CL$ and $C_{DEP}^\KC$ appearing in these expressions  may differ since \cite{masini2019regularized} and \cite{wu2016performance} use different dependence conditions  to derive their concentration inequalities and the corresponding error bounds. If the error bounds for the Lasso are derived under the dependence conditions used in \cite{wu2016performance}, i.e., under physical dependence, then $C_{DEP}^\KC$ can be chosen such that it is identical up to constants to $C_{DEP}^\CL$. Here 
 we focus  on the bound derived under the physical dependence condition. In this case  we get that   $$C_{DEP}^\CL\leq C (\sum_{j=0}^\infty \|\A^j\|_2 \|)^2 \|\Sigmaeps\|_2 \max_{\|v\|_2=1} (E| v^\top \Sigmaeps^{-1/2} \eps_1\|_2^\tau)^{2/\tau}.$$ Furthermore, we have by Proposition 2.3 in \cite{basu2015}, that 
 \begin{align*}
    \|\Gammas^{-1}\|_2&\leq \sup_{\omega\in [-\pi.\pi]} \| f(\omega)^{-1}\|_2=\sup_{\omega\in [-\pi.\pi]} \| (I_d-\sum_{s=1}^p A_s \exp(i\omega s))^\top \Sigmaeps^{-1} (I_d-\sum_{s=1}^p A_s \exp(-i\omega s))\|_2 \\
    &\leq (1+\sum_{s=1}^p \|A_s\|_2)^2 \|\Sigmaeps^{-1}\|_2.
\end{align*}
Hence,
\begin{align*}
\|A-\hat A\|_{\max} & \leq C \max_{\|v\|_2=1} (E| v^\top \Sigmaeps^{-1/2} \eps_1\|^\tau)^{2/\tau} \gps\\
& \ \ \ \times
(\sum_{j=0}^\infty \|\A^j\|_2 \|)^2  (1+\sum_{s=1}^p \|A_s\|_2)^{2-\tilde q} \|\Sigmaeps\|_2  \|\Sigmaeps^{-1}\|_2^{(2-\tilde q)/2},
\end{align*}
where $\tilde q=q$ for the Lasso and $\tilde q=0$ for the Dantzig estimator. 

The term $\max_{\|v\|_2=1} (E| v^\top \Sigmaeps^{-1/2} \eps_1\|^\tau)^{2/\tau}$ depends on the distribution of the innovations. If $\eps$ is Gaussian, this quantity is not affected by $\Sigmaeps$. Furthermore, we have $\|\Sigmaeps\|_2 \| \Sigmaeps^{-1}\|_2 \geq \max_i e_i^\top \Sigmaeps e_i/\min_i e_i^\top \Sigmaeps e_i$. This  means that the dependence among the innovations as well as different variances between the  components of the innovations vector,  could have a negative effect on   the behavior of  the estimators.

Regarding  the influence of $\A$, recall that 
 the decay rate of $\A^j$ depends on the largest absolute eigenvalue of $\A$. Hence, if the VAR system is highly persistent, i.e., the largest absolute eigenvalue of $\A$ is close to one, then the constant  $\sum_{j=0}^\infty \|\A^j\|_2$ can be large.

\section{Numerical results}
In this section, we investigate by means of simulations, the finite sample performance of the  estimation procedures discussed, i.e., of the lasso estimator (\ref{eq.mse.basu}), of the row wise lasso (\ref{eq.row.LASSO}) and of the Dantzig  estimator (\ref{eq.VAR.CLIME.comp}). We denote the estimator \eqref{eq.mse.basu} by  {\ESTVECLASSO}, the estimator \eqref{eq.row.LASSO2} by {\ESTROWLASSO}, and the estimator \eqref{eq.VAR.CLIME.comp} by {\ESTROWDANTZIG}. All results presented are based on implementations in  \emph{R} \citep{R}. To compute the Lasso estimators \eqref{eq.mse.basu} and \eqref{eq.row.LASSO2}, we use the package \emph{glmnet} \citep{glmnet}. {\ESTVECLASSO} uses as  weighting matrix the inverse of the estimated innovation variance based on the estimated residuals of the  {\ESTROWLASSO} estimator.
%computed as in  \eqref{eq.row.LASSO2}. 
For the Dantzig estimator \eqref{eq.VAR.CLIME.comp}, we use the package \emph{fastclime} \citep{fastclime}. It is worth mentioning here  that the estimation procedures using  the aforementioned  implementations highly differ with respect to  computing time. For instance, in order to estimate a VAR$(1)$ model  of dimension $d=100$ using  $n=100$ observations and without parallel computing on a personal computer, {\ESTROWLASSO} requires approximately 5 seconds, {\ESTVECLASSO} approximately  4.3 minutes, and {\ESTROWDANTZIG} approximately  16 minutes. More advanced techniques in linear programming may speed up the computation of the Dantzig estimator; see for instance \cite{mazumder2019computing}. 

The three estimators considered are used plain as well as with the following three modifications:

\begin{itemize}
    \item[S:] We standardize all  input time series, i.e., we insert a weighting matrix $W$ in \eqref{eq.VARregression} where  $W$ is a diagonal matrix with diagonal entries  given by the estimated standard deviations of each  time series.
    %$\{e_i^\top X_{t}\},i=1,\dots,d$. 
    We then apply each  estimation procedure  on the transformed data $\mathcal{Y}W^{-1}=\mathcal{X}(I_p \otimes W)^{-1}(I_p \otimes W) B W^{-1} +\mathcal{E}W^{-1}=\mathcal{\tilde X} \tilde B+\mathcal{\tilde E}$ and transform the obtained estimates  $\widehat{\widetilde B}$ back using  $\hat B=(I_p \otimes W)^{-1}\widehat{\widetilde B} W$. 
    \item[A:] We apply a second adaptive step, i.e., we run the estimators twice and in the second run we insert penalty weights. For the coefficient $B_{i,j}$, we use in a second round the  penalty   $1/(|\hat B^{(1)}_{i,j}|+1/\sqrt{n})$, where $\hat B^{(1)}_{i,j}$ denotes the estimated coefficients obtained in the first round.
    \item[T:] We threshold the estimates, i.e., the final estimate is obtained by $\THRl(\hat B)$. Here, we use the adaptive thresholding, that is,  $\THRl(z)=z(1-|\lambdan/z|^\nu)_+$ for $ \nu>3$.
 \end{itemize}
A combination of the three aforementioned  modifications also can be used and the particular  modification  applied  is denoted  by capitalized letters. For instance,  the notation  {\ESTROWLASSO} SA means that {\ESTROWLASSO} is used with standardized time series and a second adaptive step. Notice that  {\ESTVECLASSO} possesses one tuning parameter, while  {\ESTROWLASSO} and {\ESTROWDANTZIG} possess $d$ tuning parameters, i.e., one for each row. These tuning parameters are selected using 
 the Bayesian Information Criterion (BIC). Additionally, the Extended Regularized Information Criterion (ERIC), see \cite{hui2015tuning}, is used as a second option to select the tuning parameters.

To evaluate the  performance of the different estimation procedures compared, we use the following quantities:
\begin{enumerate}[i)]
        \item $\|\A-\hat \A \|_\infty$, i.e., the estimation error for the parameter matrix $\A$ with respect to the  $\|\cdot \|_\infty$ matrix norm.\label{item.perf.A}
\item  $\|\Gammah-\Gammas\|_\infty/\|\Gammas\|_\infty$, i.e., the relative estimation error for the lag zero autocovariance with respect to the  $\|\cdot \|_\infty$ matrix norm.\label{item.perf.Gam}
\item $\int \|f(\omega)-\hat f(\omega)\|_\infty d\omega/\int \|f(\omega)\|_\infty d\omega$, i.e., the relative integrated estimation error for the spectral density matrix with respect to the  $\|\cdot \|_\infty$ matrix norm. In our calculations,   integrals are  approximated by   sums over the corresponding Fourier frequencies.\label{item.perf.f}
\item $1/d\sum_{j=1}^d MSE(\hat X_{n+h;j})/\sigma_j^2$, where  $\sigma_j=\sqrt{\var(\eps_{1;j})}$ and $\hat X_{n+h;j}$ denotes the forecast of the $j$th element of $X_{n+h}$ using  $\hat A$ and $X_1,\dots,X_n$. That is,    the averaged forecast error is computed  which is measured by  the  mean squared error for the forecasting horizon $h$. The mean squared error is estimated  using  $1000$ Monte Carlo runs. \label{item.perf.forecast}
\end{enumerate}
In order to estimate the second-order characteristics, i.e., $\Gamma(0)$ and $f$, we need to estimate the innovations variance $\Sigmaeps$. For this we use the estimator  \eqref{eq.eps.est.thres} and the implementation given in the package \emph{FinCovRegularization} \citep{FinCovRegularization}, which uses cross-validation to select the threshold parameter. 

Additionally to  the comparison of the different estimators,  we also investigate  the influence of the data generating process on the performance of the estimators. For this, we consider two groups of examples. In the first group, we vary the variance matrix of the innovations and keep everything else fixed. In the second group, we vary the dimension, the sparsity and the persistence of the processes but keep the variance matrix of the innovations fixed. 

The data generating processes in the first group of examples are different VAR$(4)$ processes. These processes are of dimension $d=14$ and  the same parameter matrix $A \in \mathcal{M}^{(2)}(0,5,17,4)$ is used,  with largest absolute eigenvalue equal to  $0.8$. The innovations are Gaussian and four different variance matrices are considered: a diagonal matrix with homogeneous variances among the components (denoted as DM), i.e., $\Sigmaeps=I_d$,  a diagonal matrix with heterogeneous variances among the components (denoted as DT), $\Sigmaeps=\diag(1.88\times 10^{-02}, 2.61\times 10^{-03}, 4.40\times 10^{-03}, 3.04\times 10^{-06}, 1.58\times 10^{-06}, 3.99\times 10^{-03}, 1.51\times 10^{-05}, 2.51\times 10^{-05}, 1.34\times 10^{-06}, 1.03\times 10^{-02}, 4.32\times 10^{-03}, 9.77\times 10^{-06}, 3.93\times 10^{-05}, 2.03\times 10^{-06})$, a non-sparse  variance matrix with homogeneous variances among the components and largest eigenvalue of 2.5 and smallest of 0.21 (denoted as FM), and the same non-sparse  variance matrix but now with heterogeneous variances among the components as given in the second matrix leading to a largest eigenvalue of $1.92\times 10^{-2}$ and a smallest eigenvalue of $4.45\times 10^{-7}$ (denoted as FT). 

The data generating processes in the second group of examples are VAR$(1)$ processes. The processes are of different dimensions $d=10, 25, 50$, and $100$, the innovations are Gaussian with $\Sigmaeps=I_d$ and the parameter matrix $A$ is generated randomly with row- and column-wise maximal sparsity given by $s=1, 3, 5, 10$, i.e., $A\in \mathcal{M}(0,s,\widetilde M,1)$, where $\widetilde M$ may differ from matrix to matrix, and the largest absolute eigenvalue takes the values  $\rho=0.6, 0.8, 0.9, 0.95$. The random generation of $A$ is done in four steps. First, a random matrix with largest absolute eigenvalue less than one is generated. Second, the $d-s$  smallest coefficients in absolute value  within each row are set equal to zero. Third, the $d-s$  smallest coefficients in absolute value within each column are set equal to zero. Finally, the obtained sparse matrix is rescaled so that its largest absolute eigenvalue equals  $\rho$. Note that most of the coefficients in the third step are already zero due to the second step. If in the fourth step no scaling is possible, i.e., the eigenvalue of the obtained sparse matrix is zero, then we set $e_1^\top A e_1=\rho$ and we rescale the matrix. 

A summary of the  results obtained are shown in Table 1 to Table 5. Table 1 presents  results  for Example 1 and for all four performance criteria used. Table 2 to Table 5 present the results for Example 2  and  for each one of the four different  performance criteria separately. 
For all three estimators considered, standardizing the input leads in most cases  to a better performance of the estimator. Furthermore, including a second adaptive step also improves the performance of the estimators in most of the cases considered. Additional thresholding has in most cases no  negative effect on  the performance of the estimators. For this reason we present  for Example~1 only the estimates obtained after applying  all modifications discussed while for Example~2 we focus on the estimates  with standardized inputs, a second adaptive step, and (an optional) thresholding. Furthermore, {\ESTVECLASSO} performs much better with ERIC than with BIC and for this reason  we focus on this selection rule only for this estimator applied in  Example~2. In the following we discuss the results obtained separately for Example 1 and for  Example 2. 

\subsection{Example 1}
As mentioned, the underlying processes of this example  are VAR$(4)$ processes with four different variance matrices for the Gaussian innovations. Two of them have very heterogeneous variances among the components. The parameter matrix $B=(A_1,A_2,A_3,A_4)\in \R^{14\times 56}$ has a row- and column-wise sparsity of $5$ and has in total $25$,  non-zero coefficients. A sample size   of $n=100$ observations  is used in the example.  Given  this  sample size,
 $25$ non-zero coefficients may be considered to be  too many for the {\ESTVECLASSO} estimator to handle. More specifically, if we plug this parameter design  into the corresponding error bounds, we get  for {\ESTVECLASSO} the bound $\|B-\hat B\|_\infty\leq\|\veco(B)-\veco(\hat B)\|_1\leq 25 \sqrt{\log(d^2p)/n}C\approx 6.5 \times C$,  compared to the bound $\|B-\hat B\|_\infty \leq 5 \sqrt{\log(dp)/n}C \approx 1 \times C$,  for {\ESTROWLASSO} and {\ESTROWDANTZIG}.
 Notice  that the error bounds for {\ESTVECLASSO} are derived, using 
 %$vec(\cdot)$,   for 
 $\|\veco(B)-\veco(\hat B)\|_1$ and  that $\|B-\hat B\|_\infty$ could be substantially smaller. 

In Section~\ref{sec.3.1}, we mentioned that the estimation error of the parameter matrix $A$ can be bounded among others by $\|\Sigmaeps\|_2 \|\Sigmaeps^{-1}\|_2$. This differs highly between the heterogeneous cases, in which we have $\|\Sigmaeps\|_2 \|\Sigmaeps^{-1}\|_2>10^{4}$, and the  homogeneous cases, in which we have $\|\Sigmaeps\|_2 \|\Sigmaeps^{-1}\|_2<10^{2}$. This means  that  $\|\Sigmaeps\|_2 \|\Sigmaeps^{-1}\|_2$ is at least 100 times higher for the heterogeneous cases than for the homogeneous ones, and we see  this difference also in the performance of the estimators. For all estimation procedures considered, the estimation error of the parameter matrix $A$, i.e.,  criterion \ref{item.perf.A}), is considerably higher (up to factor 10) for the heterogeneous cases than for the homogeneous ones. Interestingly, we observe this only for the estimation error $ \|A-\widehat{A}\|_\ell$.
 For  the second-order properties as well as for  forecasting,  the corrsponding errors are  affected  much less by the heterogeneity of the   variance of the innovations.    

For {\ESTVECLASSO} we observe, that standardizing the input (S) greatly improves its  performance. Furthermore, a second adaptive step (A) is also very beneficial. However, additional thresholding (T) has almost no effect. Furthermore, {\ESTVECLASSO} performs better with ERIC than with BIC. For the heterogeneous cases, i.e., FT and DT, {\ESTVECLASSO} SA ERIC is  among the best ones with respect to all four evaluation criteria \ref{item.perf.A}) to \ref{item.perf.forecast}), previously discussed. For the homogeneous case, {\ESTVECLASSO} SA ERIC performs  good in forecasting, i.e., with respect to criterion \ref{item.perf.forecast}), but considerably worse in estimating the second-order characteristics of the VAR process, i.e., with respect to  criteria \ref{item.perf.Gam}) and \ref{item.perf.f}). 

For {\ESTROWLASSO}, a second adaptive step (A) improves the performance. Regarding the estimation of the second-order properties, criteria  \ref{item.perf.Gam}) and \ref{item.perf.f}),  standardizing the input (S) is beneficial for all cases. When it comes to forecasting, standardizing  is only beneficial for the heterogeneous cases. {\ESTROWLASSO} performs better with BIC than with ERIC. Additional thresholding (T) seems to affect the performance only slightly with no clear tendency. For the estimation of the second-order properties, {\ESTROWLASSO} SA BIC performs close to the best one in all cases. For forecasting, {\ESTROWLASSO} SA BIC is close to the best one in the heterogeneous cases and {\ESTROWLASSO} A BIC performs close to the best one in the homogeneous cases.

The combination of standardizing the input (S) and a second adaptive step (A) greatly improves the performance of {\ESTROWDANTZIG}. Again, BIC is here the better option than ERIC. Additional thresholding (T) has almost no effect but in some cases it brings some improvements. {\ESTROWDANTZIG} TSA BIC is not among the best estimates   of the parameter matrix $ A$ itself, i.e, for criterion  \ref{item.perf.A}), but it is  best or close to the best one in all cases for the estimation of the second-order properties as well as for forecasting, i.e., for criteria  \ref{item.perf.Gam}), \ref{item.perf.f}), and \ref{item.perf.forecast}). 

\subsection{Example 2}
We focus on the results of the thresholded estimators with standardized input, a second adaptive step, i.e., on estimators denoted by TSA. Notice that the results presented in Table 2 to Table 5  give the  relative performances of the different estimates. That is,  for each case considered, the results of each estimators are divided with those of the best performing estimator. Hence the closer to one is the corresponding entry in the tables, the closer to the best performing estimator is the particular estimator. Additionally, and in  order to also have the information related to the level of its performance, we denote for  the best performing estimator and instead of $1.00$, the absolute value of its score in brackets. 

For the estimation or the parameter matrix $A$ itself and over all considered settings, {\ESTVECLASSO} TSA ERIC is best with  {\ESTROWLASSO} BIC TSA performing very  close to the best. Overall the  performance of   {\ESTROWLASSO} BIC TSA is no more that of  $3\%$ worse than that of the best performing estimator. When the  dimension of the VAR model is small, additional thresholding could have a negative effect  on the performance whereas for large dimensions it is the other way around. Here  {\ESTROWDANTZIG} and {\ESTROWLASSO} perform much better with BIC than with ERIC. The estimation error   of the best performing procedures increases with dimension and  decreases with increasing persistence. A change in the sparsity levels seems to have  a minor affect on  performance. 

For the estimation of the second-order properties, i.e., for criteria  \ref{item.perf.Gam}) and \ref{item.perf.f}), {\ESTROWLASSO} TSA BIC performs very good for all persistence,  all sparsity levels and  for all  dimensions considered. For {\ESTROWLASSO} and {\ESTROWDANTZIG} using  ERIC seems to lead to  worse  results compared to those obtained using  BIC. The best {\ESTVECLASSO} and {\ESTROWDANTZIG} estimates perform usually more than $10\%$ to $20\%$ worse than the estimates  {\ESTROWLASSO} TSA BIC. In the case of  a strong persistence level ($\rho=0.95$), {\ESTROWDANTZIG} may lead to  unstable results, that is, the modulus of the largest absolute eigenvalue of the estimated parameter matrix of VAR model may be greater than one. No correction to stability is used here and therefore,  these results lead to  estimates of the second-order characteristics  which are not satisfactory. The performance of {\ESTROWDANTZIG} seems to get worse  with increasing dimension whereas no clear tendency can be observed with respect to  the sparsity and  to the different  persistence levels considered. 
%However, note that the performance of the estimation of second-order properties is here given in relative norms. 

Regarding forecasting, both selection options, i.e., BIC and ERIC, lead for {\ESTROWLASSO} and {\ESTROWDANTZIG} to good results. The performance of the three estimation methods considered differs only slightly. {\ESTROWLASSO} SA ERIC performs best with  {\ESTVECLASSO} SA ERIC and {\ESTROWDANTZIG} TSA ERIC being very  close to the best performance. The difference  between {\ESTVECLASSO} and {\ESTROWLASSO} increases with increasing dimension and persistence level,  whereas no clear tendency  can be identified for the differences   between {\ESTROWDANTZIG} and {\ESTROWLASSO}. Here, thresholding has a slight negative effect on  the performance. Note, however, that in contrast to estimating second-order characteristics,  thresholding is not necessary. Therefore, 
the available theoretical results for  the estimators considered,  justify their application to forecasting without the need for the use of an  additional thresholding step.
The forecast error of the best procedures increases slightly with dimension and with  persistence. A change in the sparsity level seems to have a rather  minor affect on the results obtained. 

\subsection{Conclusions}
If one is interested in estimating the  second-order characteristics of a VAR system, {\ESTROWDANTZIG} seems to be a good choice  for the first example, while  {\ESTROWLASSO} performs much better for the same estimation problem and for the second example considered. For this reason we  suggest to use {\ESTROWLASSO} for this objective. Furthermore, we suggest to use  {\ESTROWLASSO} with the modifications TSA, i.e., with  thresholding,   standardizing the time series  and  incorporating a second adaptive step. A second adaptive step  improves considerably  the performance of this estimator and standardizing the  time series  helps to develop some robustness in the sense that the performance of this estimator is not largely affected by the  particular second order characteristics of the underlying processes. As we have seen in Theorem~\ref{thm.thres}, thresholding gives the theoretical justification for using  {\ESTROWLASSO} in order  to consistently estimate the second-order characteristics of the underlying VAR process. In the examples considered, thresholding  does not necessarily  improve the performance of the estimator but it enlarges the range of its applicability by gaining  consistency with respect to a  much  larger set  of matrix norms.  To select the tuning parameter for estimating second-order characteristics, our simulation study suggests  that BIC is the better option for {\ESTROWLASSO}, that is  {\ESTROWLASSO} TSA BIC is the suggested estimation method to use for estimating   second-order properties.
%is the best option to estimate second-order properties.

If the main interest is forecasting, all three estimators perform well and there is no one which clearly outperforms the others. Since valid forecasts can be obtained with less consistency requirements on the estimators compare to those needed for consistent estimation  of second order characteristics,  an additional thresholding may be omitted in this case. Based on the first example, {\ESTVECLASSO} SA ERIC and {\ESTROWDANTZIG} TSA BIC seems to be slightly more robust than {\ESTROWLASSO} TSA BIC. Our findings also suggest that if  {\ESTVECLASSO} is used, then  ERIC should be preferred to  BIC for selecting the tuning parameter. Notice, however, that  the existing  theory for  {\ESTVECLASSO} does not cover all sparsity settings considered in our simulation study. 

\begin{sidewaystable}[ht]
\begin{tabular}{|l|rrrr|rrrr|rrrr|rrrr|}
  \hline   & \multicolumn{4}{c|}{$\|\hat A-A\|_\infty$} &   \multicolumn{4}{c|}{$\|\Gammah-\Gammas\|_1/\|\Gammas\|_1$} &  \multicolumn{4}{c|}{$\int \|f(\omega)-\hat f(\omega)\|_1d\omega/\int \|f(\omega)\|_1d\omega$} & \multicolumn{4}{|c|}{$1/d\sum_{j=1}^d MSE(\hat X_{n+1;j})/\sigma_j^2$} \\ & FM & FT & DM & DT &  FM & FT & DM & DT &  FM & FT & DM & DT & FM & FT & DM & DT \\   \hline
 {\ESTVECLASSO} & 14.47 & 14.50 & 14.40 & 14.52 & 1.07 & 0.67 & 3.91 & 0.65 & 0.91 & 0.70 & 0.91 & 0.69 & 70.96 & 2.35 & 63.31 & 2.47 \\ 
  {\ESTVECLASSO} S & 2.16 & 9.99 & 2.05 & 10.10 & 0.77 & 0.65 & 0.72 & 0.61 & 0.76 & 0.67 & 0.71 & 0.60 & 1.42 & 1.21 & 1.42 & 1.21 \\ 
  {\ESTVECLASSO} A & 14.49 & 14.47 & 14.40 & 14.47 & 0.80 & 0.66 & 0.80 & 0.60 & 0.82 & 0.69 & 0.82 & 0.64 & 38.45 & 2.02 & 35.12 & 2.11 \\ 
  {\ESTVECLASSO} SA & 1.90 & 9.09 & 1.86 & 9.59 & 0.74 & 0.56 & 0.67 & 0.50 & 0.73 & 0.57 & 0.66 & 0.48 & 1.27 & 1.11 & 1.25 & 1.12 \\ 
  {\ESTVECLASSO} TA & 14.49 & 14.47 & 14.40 & 14.47 & 0.81 & 1.07 & 0.80 & 0.66 & 0.83 & 0.68 & 0.82 & 0.63 & 39.03 & 1.89 & 35.67 & 1.93 \\ 
  {\ESTVECLASSO} TSA & 1.84 & 10.58 & 1.84 & 12.12 & 0.74 & 0.56 & 0.67 & 0.50 & 0.74 & 0.57 & 0.66 & 0.48 & 1.28 & 1.13 & 1.26 & 1.14 \\ 
  {\ESTVECLASSO} S ERIC & 1.98 & 11.10 & 1.90 & 11.12 & 0.72 & 0.63 & 0.67 & 0.60 & 0.71 & 0.64 & 0.67 & 0.60 & 1.30 & 1.17 & 1.31 & 1.17 \\ 
  {\ESTVECLASSO} SA ERIC & 1.62 & 9.29 & 1.59 & \textbf{9.54} & 0.69 & 0.55 & 0.63 & 0.51 & 0.68 & \textbf{0.56} & 0.63 & 0.50 & 1.20 & \textbf{1.10} & 1.20 & \textbf{1.10} \\ 
  {\ESTVECLASSO} TSA ERIC & 1.53 & \textbf{8.86} & 1.53 & 9.56 & 0.70 & 0.55 & 0.64 & 0.48 & 0.69 & \textbf{0.56} & 0.63 & \textbf{0.46} & 1.20 & 1.11 & 1.20 & 1.11 \\ 
   \hline
{\ESTROWLASSO} & 1.94 & 14.53 & 1.74 & 14.54 & 0.66 & 0.62 & 0.69 & 0.63 & 0.69 & 0.67 & 0.71 & 0.68 & 1.42 & 2.11 & 1.41 & 2.15 \\ 
  {\ESTROWLASSO} S & 2.12 & 12.33 & 2.05 & 12.15 & 0.71 & 0.66 & 0.72 & 0.63 & 0.71 & 0.68 & 0.72 & 0.63 & 1.54 & 1.23 & 1.48 & 1.23 \\ 
  {\ESTROWLASSO} A & \textbf{1.24} & 14.50 & \textbf{1.22} & 14.50 & 0.51 & 0.57 & 0.51 & 0.56 & 0.59 & 0.62 & 0.59 & 0.61 & 1.18 & 1.86 & 1.18 & 1.89 \\ 
  {\ESTROWLASSO} SA & 1.49 & 11.46 & 1.39 & 11.65 & 0.47 & 0.55 & 0.46 & 0.52 & 0.48 & 0.58 & 0.47 & 0.53 & 1.35 & 1.11 & 1.30 & 1.11 \\ 
  {\ESTROWLASSO} TA & 1.52 & 14.50 & 1.57 & 14.50 & 0.87 & 0.57 & 0.87 & 0.56 & 1.25 & 0.62 & 1.24 & 0.61 & 578.12 & 1.86 & 458.47 & 1.89 \\ 
  {\ESTROWLASSO} TSA & 1.42 & 11.07 & 1.33 & 11.05 & 0.48 & 0.55 & 0.47 & 0.50 & 0.49 & 0.57 & 0.48 & 0.50 & 1.35 & 1.11 & 1.30 & 1.11 \\ 
  {\ESTROWLASSO} S ERIC & 2.22 & 56.90 & 2.20 & 60.73 & 0.68 & 0.76 & 0.69 & 0.82 & 0.68 & 0.95 & 0.69 & 1.07 & 1.57 & 1.97 & 1.51 & 2.03 \\ 
  {\ESTROWLASSO} SA ERIC & 1.77 & 60.39 & 1.76 & 62.45 & 0.49 & 0.95 & 0.48 & 0.94 & 0.52 & 1.30 & 0.51 & 1.38 & 1.43 & 2.00 & 1.39 & 2.06 \\ 
  {\ESTROWLASSO} TSA ERIC & 1.72 & 59.88 & 1.71 & 61.62 & 0.52 & 0.97 & 0.50 & 0.93 & 0.54 & 1.31 & 0.52 & 1.36 & 1.42 & 1.99 & 1.38 & 2.06 \\ 
   \hline
{\ESTROWDANTZIG} & 2.53 & 14.49 & 2.20 & 14.53 & 0.60 & 0.53 & 0.63 & 0.47 & 0.66 & 0.62 & 0.67 & 0.57 & 1.49 & 6.08 & 1.45 & 5.47 \\ 
  {\ESTROWDANTZIG} S & 2.59 & 13.74 & 2.23 & 13.49 & 0.61 & 0.64 & 0.62 & 0.62 & 0.61 & 0.65 & 0.62 & 0.63 & 1.27 & 1.20 & 1.24 & 1.20 \\ 
  {\ESTROWDANTZIG} A & 1.69 & 14.49 & 1.56 & 14.52 & 0.48 & \textbf{0.52} & 0.48 & \textbf{0.45} & 0.60 & 0.61 & 0.61 & 0.54 & 1.22 & 6.08 & 1.19 & 5.47 \\ 
  {\ESTROWDANTZIG} SA & 2.01 & 12.00 & 1.77 & 11.88 & \textbf{0.42} & 0.54 & \textbf{0.42} & 0.50 & \textbf{0.44} & 0.57 & 0.43 & 0.52 & \textbf{1.14} & 1.11 & \textbf{1.12} & \textbf{1.10} \\ 
  {\ESTROWDANTZIG} TA & 2.37 & 14.49 & 2.10 & 14.52 & 0.81 & \textbf{0.52} & 0.80 & \textbf{0.45} & 1.15 & 0.61 & 1.11 & 0.54 & 500.57 & 6.08 & 394.63 & 5.47 \\ 
  {\ESTROWDANTZIG} TSA & 1.99 & 11.65 & 1.76 & 11.57 & \textbf{0.42} & 0.54 & \textbf{0.42} & 0.48 & \textbf{0.44} & \textbf{0.56} & \textbf{0.42} & 0.49 & \textbf{1.14} & \textbf{1.10} & \textbf{1.12} & \textbf{1.10} \\ 
  {\ESTROWDANTZIG} S ERIC & 2.67 & 50.81 & 2.33 & 57.45 & 0.56 & 0.91 & 0.57 & 0.92 & 0.57 & 1.20 & 0.58 & 1.32 & 1.30 & 1.71 & 1.27 & 1.79 \\ 
  {\ESTROWDANTZIG} SA ERIC & 2.17 & 50.37 & 1.96 & 55.83 & 0.43 & 1.11 & 0.43 & 1.07 & 0.48 & 1.50 & 0.48 & 1.62 & 1.20 & 1.68 & 1.19 & 1.78 \\ 
  {\ESTROWDANTZIG} TSA ERIC & 2.16 & 50.02 & 1.95 & 55.42 & 0.43 & 1.13 & 0.43 & 1.07 & 0.48 & 1.52 & 0.47 & 1.62 & 1.20 & 1.68 & 1.19 & 1.77 \\ 
  \hline
  \end{tabular}
\caption{Example 1 -- VAR$(4)$,  $d=14, \rho=0.8, s=5, n=100$} 
\end{sidewaystable}

% latex sidewaystable generated in R 3.5.1 by xsidewaystable 1.8-4 package
% Mon May 25 15:37:28 2020
\begin{sidewaystable}[ht]
\centering
\begin{tabular}{|l|l|rrrr|rrrr|rrrr|rrrr|}
  \hline & $s$   & \multicolumn{4}{c|}{$1$}&\multicolumn{4}{c|}{$3$} &\multicolumn{4}{c|}{$5$} &\multicolumn{4}{c|}{$10$}    \\ \hline & \diagbox{$d$}{$\rho$}  & 0.6 & 0.8 & 0.9 & 0.95 & 0.6 & 0.8 & 0.9 & 0.95 & 0.6 & 0.8 & 0.9 & 0.95 & 0.6 & 0.8 & 0.9 & 0.95 \\ \hline 
 {\ESTVECLASSO} SA ERIC & \multirow{7}{0.5cm}{10} & \textminbf {0.52} & 1.02 & 1.04 & 1.04 & \textminbf {0.82} & \textminbf {0.71} & 1.03 & \textminbf {0.75} & \textminbf {0.58} & 1.02 & 1.04 & 1.04 & \textminbf {0.87} & \textminbf {0.86} & \textminbf {0.75} & 1.01 \\ 
  {\ESTVECLASSO} TSA ERIC &  & \textminbf {0.52} & \textminbf {0.43} & \textminbf {0.46} & \textminbf {0.46} & 1.06 & \textminbf {0.71} & 1.03 & \textminbf {0.75} & 1.02 & \textminbf {0.55} & \textminbf {0.47} & \textminbf {0.45} & 1.07 & 1.02 & \textminbf {0.75} & \textminbf {0.72} \\ 
  {\ESTROWLASSO} SA BIC &  & 1.15 & 1.09 & 1.02 & 1.02 & 1.13 & 1.03 & 1.01 & \textminbf {0.75} & 1.16 & 1.02 & 1.02 & \textminbf {0.45} & 1.14 & 1.03 & 1.01 & \textminbf {0.72} \\ 
  {\ESTROWLASSO} TSA ERIC &  & 1.23 & 1.28 & 1.15 & 1.15 & 1.02 & \textminbf {0.71} & \textminbf {0.73} & \textminbf {0.75} & 1.17 & 1.09 & 1.15 & 1.16 & 1.02 & \textminbf {0.86} & \textminbf {0.75} & \textminbf {0.72} \\ 
  {\ESTROWLASSO} TSA BIC &  & 1.17 & 1.12 & 1.02 & \textminbf {0.46} & 1.17 & 1.06 & 1.01 & \textminbf {0.75} & 1.19 & 1.04 & 1.02 & \textminbf {0.45} & 1.18 & 1.07 & 1.01 & \textminbf {0.72} \\ 
  {\ESTROWDANTZIG} TSA ERIC &  & 1.21 & 1.30 & 1.22 & 1.22 & 1.05 & 1.07 & 1.07 & 1.04 & 1.16 & 1.15 & 1.21 & 1.22 & 1.05 & 1.05 & 1.05 & 1.06 \\ 
  {\ESTROWDANTZIG} TSA BIC &  & 1.17 & 1.14 & 1.11 & 1.09 & 1.21 & 1.14 & 1.10 & 1.04 & 1.19 & 1.09 & 1.09 & 1.09 & 1.22 & 1.12 & 1.08 & 1.06 \\ 
   \hline
{\ESTVECLASSO} SA ERIC & \multirow{7}{0.5cm}{25} & \textminbf {1.03} & 1.01 & 1.08 & 1.16 & \textminbf {1.03} & 1.01 & 1.08 & 1.14 & \textminbf {1.04} & 1.01 & 1.10 & 1.16 & \textminbf {1.09} & 1.01 & 1.07 & 1.16 \\ 
  {\ESTVECLASSO} TSA ERIC &  & 1.09 & \textminbf {0.91} & 1.05 & 1.12 & 1.08 & \textminbf {0.90} & 1.04 & 1.09 & 1.08 & \textminbf {1.03} & 1.06 & 1.12 & 1.08 & \textminbf {0.90} & 1.05 & 1.12 \\ 
  {\ESTROWLASSO} SA BIC &  & 1.16 & 1.04 & 1.02 & 1.03 & 1.15 & 1.04 & 1.03 & 1.03 & 1.14 & 1.05 & 1.02 & 1.03 & 1.16 & 1.07 & 1.01 & 1.03 \\ 
  {\ESTROWLASSO} TSA ERIC &  & 1.19 & 1.11 & 1.12 & 1.14 & 1.20 & 1.11 & 1.13 & 1.14 & 1.17 & 1.07 & 1.14 & 1.15 & 1.17 & 1.12 & 1.11 & 1.13 \\ 
  {\ESTROWLASSO} TSA BIC &  & 1.17 & 1.04 & \textminbf {0.83} & \textminbf {0.76} & 1.16 & 1.04 & \textminbf {0.79} & \textminbf {0.74} & 1.16 & 1.06 & \textminbf {0.81} & \textminbf {0.73} & 1.17 & 1.06 & \textminbf {0.83} & \textminbf {0.75} \\ 
  {\ESTROWDANTZIG} TSA ERIC &  & 1.24 & 1.20 & 1.22 & 1.22 & 1.24 & 1.19 & 1.22 & 1.20 & 1.22 & 1.15 & 1.22 & 1.23 & 1.22 & 1.20 & 1.19 & 1.25 \\ 
  {\ESTROWDANTZIG} TSA BIC &  & 1.20 & 1.16 & 1.19 & 1.22 & 1.20 & 1.17 & 1.19 & 1.22 & 1.20 & 1.16 & 1.21 & 1.23 & 1.20 & 1.17 & 1.18 & 1.24 \\ 
   \hline
{\ESTVECLASSO} SA ERIC & \multirow{7}{0.5cm}{50} & \textminbf {1.20} & \textminbf {1.07} & 1.03 & 1.05 & \textminbf {1.13} & \textminbf {1.00} & 1.04 & 1.05 & \textminbf {1.14} & \textminbf {1.06} & 1.03 & 1.07 & \textminbf {1.11} & \textminbf {1.03} & 1.03 & 1.07 \\ 
  {\ESTVECLASSO} TSA ERIC &  & 1.08 & 1.02 & \textminbf {1.07} & \textminbf {0.93} & 1.09 & 1.03 & 1.01 & \textminbf {1.00} & 1.08 & 1.02 & \textminbf {1.06} & 1.02 & 1.08 & 1.02 & \textminbf {1.06} & 1.02 \\ 
  {\ESTROWLASSO} SA BIC &  & 1.11 & 1.09 & 1.03 & 1.08 & 1.13 & 1.07 & 1.03 & 1.05 & 1.13 & 1.08 & 1.06 & 1.05 & 1.12 & 1.09 & 1.03 & 1.04 \\ 
  {\ESTROWLASSO} TSA ERIC &  & 1.74 & 1.58 & 1.34 & 1.42 & 1.81 & 1.67 & 1.31 & 1.43 & 1.82 & 1.54 & 1.37 & 1.39 & 1.80 & 1.63 & 1.31 & 1.33 \\ 
  {\ESTROWLASSO} TSA BIC &  & 1.09 & 1.06 & 1.01 & 1.02 & 1.12 & 1.01 & \textminbf {1.04} & 1.01 & 1.10 & 1.05 & 1.03 & \textminbf {1.01} & 1.09 & 1.04 & 1.01 & \textminbf {1.03} \\ 
  {\ESTROWDANTZIG} TSA ERIC &  & 1.76 & 1.70 & 1.50 & 1.62 & 1.84 & 1.77 & 1.51 & 1.61 & 1.89 & 1.68 & 1.49 & 1.55 & 1.91 & 1.76 & 1.47 & 1.50 \\ 
  {\ESTROWDANTZIG} TSA BIC &  & 1.13 & 1.17 & 1.17 & 1.25 & 1.15 & 1.16 & 1.19 & 1.27 & 1.17 & 1.18 & 1.20 & 1.26 & 1.14 & 1.13 & 1.20 & 1.26 \\ 
   \hline
{\ESTVECLASSO} SA ERIC & \multirow{7}{0.5cm}{100} & 1.05 & 1.08 & 1.10 & 1.14 & 1.04 & 1.06 & 1.09 & 1.14 & 1.04 & 1.04 & 1.07 & 1.15 & 1.04 & 1.06 & 1.10 & 1.16 \\ 
  {\ESTVECLASSO} TSA ERIC &  & 1.01 & 1.03 & \textminbf {1.16} & \textminbf {1.05} & 1.01 & 1.01 & \textminbf {1.15} & \textminbf {1.10} & 1.01 & \textminbf {1.39} & \textminbf {1.24} & \textminbf {1.08} & \textminbf {1.58} & \textminbf {1.34} & \textminbf {1.15} & \textminbf {1.05} \\ 
  {\ESTROWLASSO} SA BIC &  & 1.03 & 1.05 & 1.09 & 1.14 & 1.04 & 1.04 & 1.13 & 1.11 & 1.04 & 1.06 & 1.07 & 1.12 & 1.04 & 1.08 & 1.09 & 1.14 \\ 
  {\ESTROWLASSO} TSA ERIC &  & 3.17 & 3.23 & 3.13 & 3.29 & 3.16 & 3.18 & 3.13 & 3.07 & 3.13 & 3.00 & 2.96 & 3.30 & 3.03 & 3.14 & 3.07 & 3.46 \\ 
  {\ESTROWLASSO} TSA BIC &  & \textminbf {1.49} & \textminbf {1.32} & 1.02 & 1.08 & \textminbf {1.51} & \textminbf {1.34} & 1.07 & 1.05 & \textminbf {1.50} & 1.01 & 1.01 & 1.06 & 1.01 & 1.02 & 1.01 & 1.08 \\ 
  {\ESTROWDANTZIG} TSA ERIC &  & 2.91 & 2.81 & 2.67 & 2.81 & 2.89 & 2.78 & 2.70 & 2.63 & 2.88 & 2.65 & 2.50 & 2.87 & 2.78 & 2.76 & 2.57 & 2.99 \\ 
  {\ESTROWDANTZIG} TSA BIC &  & 1.05 & 1.06 & 1.12 & 1.19 & 1.04 & 1.06 & 1.16 & 1.17 & 1.05 & 1.09 & 1.15 & 1.15 & 1.03 & 1.06 & 1.12 & 1.18 \\ 
   \hline
\end{tabular}
\caption{Example 2 -- VAR(1), $\|\hat A-A\|_\infty$, n=100} 
\end{sidewaystable}
% latex sidewaystable generated in R 3.5.1 by xsidewaystable 1.8-4 package
% Mon May 25 15:37:28 2020
\begin{sidewaystable}[ht]
\centering
\begin{tabular}{|l|l|rrrr|rrrr|rrrr|rrrr|}
  \hline & $s$   & \multicolumn{4}{c|}{$1$}&\multicolumn{4}{c|}{$3$} &\multicolumn{4}{c|}{$5$} &\multicolumn{4}{c|}{$10$}    \\ \hline & \diagbox{$d$}{$\rho$}  & 0.6 & 0.8 & 0.9 & 0.95 & 0.6 & 0.8 & 0.9 & 0.95 & 0.6 & 0.8 & 0.9 & 0.95 & 0.6 & 0.8 & 0.9 & 0.95 \\ \hline 
 {\ESTVECLASSO} SA ERIC & \multirow{7}{0.5cm}{10} & 1.02 & 1.05 & 1.10 & 1.07 & 1.08 & 1.16 & 1.20 & 1.14 & 1.02 & 1.11 & 1.08 & 1.04 & 1.08 & 1.15 & 1.19 & 1.17 \\ 
  {\ESTVECLASSO} TSA ERIC &  & 1.02 & 1.05 & 1.10 & 1.07 & 1.08 & 1.16 & 1.20 & 1.14 & 1.02 & 1.11 & 1.11 & 1.07 & 1.10 & 1.15 & 1.19 & 1.20 \\ 
  {\ESTROWLASSO} SA BIC &  & 1.02 & \textminbf {0.37} & \textminbf {0.39} & \textminbf {0.46} & \textminbf {0.39} & \textminbf {0.32} & \textminbf {0.35} & \textminbf {0.43} & \textminbf {0.42} & \textminbf {0.38} & \textminbf {0.38} & \textminbf {0.46} & 1.03 & \textminbf {0.34} & \textminbf {0.36} & \textminbf {0.40} \\ 
  {\ESTROWLASSO} TSA ERIC &  & \textminbf {0.41} & \textminbf {0.37} & 1.05 & 1.11 & \textminbf {0.39} & 1.03 & 1.06 & \textminbf {0.43} & \textminbf {0.42} & 1.03 & 1.05 & 1.04 & \textminbf {0.39} & 1.03 & 1.06 & 1.02 \\ 
  {\ESTROWLASSO} TSA BIC &  & 1.02 & \textminbf {0.37} & 1.03 & \textminbf {0.46} & 1.03 & \textminbf {0.32} & \textminbf {0.35} & 1.02 & 1.02 & 1.03 & 1.03 & \textminbf {0.46} & 1.03 & \textminbf {0.34} & \textminbf {0.36} & \textminbf {0.40} \\ 
  {\ESTROWDANTZIG} TSA ERIC &  & 1.02 & 1.11 & 1.13 & 1.22 & 1.05 & 1.09 & 1.11 & 1.09 & 1.02 & 1.11 & 1.16 & 1.17 & 1.05 & 1.06 & 1.11 & 1.10 \\ 
  {\ESTROWDANTZIG} TSA BIC &  & 1.05 & 1.11 & 1.15 & 1.91 & 1.08 & 1.09 & 1.11 & 1.12 & 1.05 & 1.11 & 1.16 & 1.30 & 1.10 & 1.06 & 1.08 & 1.12 \\ 
   \hline
{\ESTVECLASSO} SA ERIC & \multirow{7}{0.5cm}{25} & 1.02 & 1.20 & 1.26 & 1.23 & 1.04 & 1.17 & 1.25 & 1.26 & 1.02 & 1.17 & 1.31 & 1.26 & 1.04 & 1.15 & 1.29 & 1.26 \\ 
  {\ESTVECLASSO} TSA ERIC &  & 1.04 & 1.20 & 1.28 & 1.28 & 1.02 & 1.17 & 1.25 & 1.28 & 1.04 & 1.17 & 1.31 & 1.28 & 1.04 & 1.15 & 1.31 & 1.30 \\ 
  {\ESTROWLASSO} SA BIC &  & \textminbf {0.56} & \textminbf {0.45} & \textminbf {0.43} & \textminbf {0.47} & 1.02 & \textminbf {0.46} & \textminbf {0.44} & \textminbf {0.46} & \textminbf {0.56} & \textminbf {0.47} & \textminbf {0.42} & \textminbf {0.47} & \textminbf {0.57} & \textminbf {0.46} & \textminbf {0.42} & \textminbf {0.47} \\ 
  {\ESTROWLASSO} TSA ERIC &  & 1.04 & 1.04 & 1.02 & 1.04 & 1.04 & 1.02 & 1.02 & 1.02 & 1.02 & 1.02 & 1.05 & 1.02 & 1.02 & 1.04 & 1.05 & 1.04 \\ 
  {\ESTROWLASSO} TSA BIC &  & \textminbf {0.56} & 1.02 & \textminbf {0.43} & 1.02 & \textminbf {0.56} & \textminbf {0.46} & \textminbf {0.44} & \textminbf {0.46} & \textminbf {0.56} & \textminbf {0.47} & \textminbf {0.42} & \textminbf {0.47} & \textminbf {0.57} & \textminbf {0.46} & 1.02 & 1.02 \\ 
  {\ESTROWDANTZIG} TSA ERIC &  & 1.04 & 1.09 & 1.14 & 4.91 & 1.04 & 1.09 & 1.09 & 1.35 & 1.04 & 1.06 & 1.14 & 5.89 & 1.02 & 1.09 & 1.14 & 1.23 \\ 
  {\ESTROWDANTZIG} TSA BIC &  & 1.04 & 1.11 & 1.16 & 1.47 & 1.05 & 1.11 & 1.18 & 1.24 & 1.04 & 1.09 & 1.14 & 1.19 & 1.05 & 1.11 & 1.12 & 1.34 \\ 
   \hline
{\ESTVECLASSO} SA ERIC & \multirow{7}{0.5cm}{50} & 1.03 & 1.19 & 1.32 & 1.27 & 1.05 & 1.20 & 1.32 & 1.25 & 1.07 & 1.20 & 1.30 & 1.20 & 1.08 & 1.20 & 1.33 & 1.28 \\ 
  {\ESTVECLASSO} TSA ERIC &  & 1.11 & 1.23 & 1.32 & 1.29 & 1.17 & 1.22 & 1.32 & 1.27 & 1.12 & 1.25 & 1.32 & 1.22 & 1.12 & 1.22 & 1.35 & 1.30 \\ 
  {\ESTROWLASSO} SA BIC &  & \textminbf {0.64} & \textminbf {0.52} & \textminbf {0.50} & \textminbf {0.52} & 1.02 & \textminbf {0.49} & \textminbf {0.50} & \textminbf {0.52} & \textminbf {0.59} & \textminbf {0.51} & \textminbf {0.50} & \textminbf {0.55} & 1.02 & \textminbf {0.50} & \textminbf {0.49} & \textminbf {0.53} \\ 
  {\ESTROWLASSO} TSA ERIC &  & 1.11 & 1.04 & 1.02 & 1.06 & 1.17 & 1.08 & \textminbf {0.50} & 1.04 & 1.20 & 1.08 & \textminbf {0.50} & 1.04 & 1.19 & 1.06 & \textminbf {0.49} & 1.04 \\ 
  {\ESTROWLASSO} TSA BIC &  & 1.02 & \textminbf {0.52} & 1.02 & 1.02 & \textminbf {0.60} & \textminbf {0.49} & 1.02 & 1.02 & \textminbf {0.59} & \textminbf {0.51} & \textminbf {0.50} & \textminbf {0.55} & \textminbf {0.59} & \textminbf {0.50} & 1.02 & 1.02 \\ 
  {\ESTROWDANTZIG} TSA ERIC &  & 1.09 & 1.10 & 1.06 & 1.25 & 1.15 & 1.14 & 1.04 & 1.15 & 1.17 & 1.16 & 1.06 & 1.33 & 1.15 & 1.14 & 1.08 & 1.06 \\ 
  {\ESTROWDANTZIG} TSA BIC &  & 1.05 & 1.13 & 1.10 & 1.65 & 1.08 & 1.16 & 1.14 & 1.21 & 1.05 & 1.16 & 1.20 & 1.18 & 1.10 & 1.14 & 1.12 & 1.17 \\ 
   \hline
{\ESTVECLASSO} SA ERIC & \multirow{7}{0.5cm}{100} & 1.09 & 1.21 & 1.30 & 1.26 & 1.10 & 1.21 & 1.27 & 1.29 & 1.06 & 1.21 & 1.27 & 1.22 & 1.09 & 1.19 & 1.27 & 1.23 \\ 
  {\ESTVECLASSO} TSA ERIC &  & 1.13 & 1.21 & 1.30 & 1.26 & 1.12 & 1.21 & 1.27 & 1.31 & 1.10 & 1.22 & 1.27 & 1.22 & 1.13 & 1.21 & 1.27 & 1.25 \\ 
  {\ESTROWLASSO} SA BIC &  & 1.01 & \textminbf {0.58} & \textminbf {0.54} & \textminbf {0.58} & \textminbf {0.69} & 1.02 & \textminbf {0.55} & \textminbf {0.58} & \textminbf {0.69} & \textminbf {0.58} & \textminbf {0.56} & \textminbf {0.59} & \textminbf {0.70} & \textminbf {0.58} & \textminbf {0.55} & \textminbf {0.57} \\ 
  {\ESTROWLASSO} TSA ERIC &  & 1.72 & 1.29 & 1.15 & 1.09 & 1.72 & 1.35 & 1.15 & 1.09 & 1.68 & 1.28 & 1.12 & 1.10 & 1.64 & 1.33 & 1.15 & 1.16 \\ 
  {\ESTROWLASSO} TSA BIC &  & \textminbf {0.68} & \textminbf {0.58} & \textminbf {0.54} & 1.02 & 1.01 & \textminbf {0.57} & 1.02 & 1.02 & \textminbf {0.69} & 1.02 & 1.02 & 1.02 & 1.01 & \textminbf {0.58} & \textminbf {0.55} & 1.02 \\ 
  {\ESTROWDANTZIG} TSA ERIC &  & 1.49 & 1.24 & 1.22 & 1.28 & 1.49 & 1.30 & 1.20 & 1.17 & 1.43 & 1.26 & 1.20 & 1.36 & 1.43 & 1.33 & 1.27 & 1.23 \\ 
  {\ESTROWDANTZIG} TSA BIC &  & 1.04 & 1.12 & 1.15 & 1.47 & 1.04 & 1.16 & 1.16 & 1.14 & 1.06 & 1.16 & 1.12 & 1.42 & 1.09 & 1.10 & 1.15 & 1.23 \\ 
   \hline
\end{tabular}
\caption{Example 2 -- VAR(1), $\|\Gammah-\Gammas\|_\infty/\|\Gammas\|_\infty$, n=100} 
\end{sidewaystable}
% latex sidewaystable generated in R 3.5.1 by xsidewaystable 1.8-4 package
% Mon May 25 15:37:28 2020
\begin{sidewaystable}[ht]
\centering
\begin{tabular}{|l|l|rrrr|rrrr|rrrr|rrrr|}
  \hline & $s$   & \multicolumn{4}{c|}{$1$}&\multicolumn{4}{c|}{$3$} &\multicolumn{4}{c|}{$5$} &\multicolumn{4}{c|}{$10$}    \\ \hline & \diagbox{$d$}{$\rho$}  & 0.6 & 0.8 & 0.9 & 0.95 & 0.6 & 0.8 & 0.9 & 0.95 & 0.6 & 0.8 & 0.9 & 0.95 & 0.6 & 0.8 & 0.9 & 0.95 \\ \hline 
 {\ESTVECLASSO} SA ERIC & \multirow{7}{0.5cm}{10} & \textminbf {0.49} & 1.05 & 1.07 & 3.10 & 1.07 & 1.11 & 1.13 & 1.07 & \textminbf {0.49} & 1.07 & 1.05 & 1.73 & 1.07 & 1.10 & 1.13 & 1.12 \\ 
  {\ESTVECLASSO} TSA ERIC &  & 1.02 & 1.05 & 1.10 & 2.25 & 1.11 & 1.11 & 1.16 & 1.09 & 1.02 & 1.10 & 1.07 & 1.73 & 1.11 & 1.12 & 1.13 & 1.14 \\ 
  {\ESTROWLASSO} SA BIC &  & \textminbf {0.49} & \textminbf {0.42} & \textminbf {0.42} & 1.04 & 1.04 & \textminbf {0.38} & \textminbf {0.38} & 1.02 & \textminbf {0.49} & \textminbf {0.41} & \textminbf {0.42} & \textminbf {0.51} & 1.04 & \textminbf {0.40} & \textminbf {0.39} & \textminbf {0.43} \\ 
  {\ESTROWLASSO} TSA ERIC &  & 1.04 & 1.05 & 1.05 & \textminbf {0.51} & \textminbf {0.45} & \textminbf {0.38} & 1.03 & \textminbf {0.46} & 1.04 & 1.05 & 1.05 & 1.49 & \textminbf {0.46} & 1.02 & 1.03 & 1.02 \\ 
  {\ESTROWLASSO} TSA BIC &  & 1.02 & \textminbf {0.42} & 1.02 & 1.04 & 1.07 & \textminbf {0.38} & \textminbf {0.38} & \textminbf {0.46} & 1.02 & 1.02 & \textminbf {0.42} & \textminbf {0.51} & 1.07 & 1.02 & \textminbf {0.39} & \textminbf {0.43} \\ 
  {\ESTROWDANTZIG} TSA ERIC &  & 1.06 & 1.12 & 1.12 & 2.78 & 1.04 & 1.05 & 1.08 & 1.07 & 1.04 & 1.10 & 1.17 & 2.33 & 1.02 & 1.05 & 1.05 & 1.12 \\ 
  {\ESTROWDANTZIG} TSA BIC &  & 1.06 & 1.10 & 1.14 & 3.04 & 1.13 & 1.08 & 1.08 & 1.13 & 1.06 & 1.10 & 1.17 & 2.29 & 1.11 & 1.05 & 1.05 & 1.16 \\ 
   \hline
{\ESTVECLASSO} SA ERIC & \multirow{7}{0.5cm}{25} & 1.02 & 1.17 & 1.22 & 1.14 & 1.02 & 1.17 & 1.24 & 1.19 & 1.03 & 1.16 & 1.27 & 1.21 & 1.02 & 1.15 & 1.24 & 1.18 \\ 
  {\ESTVECLASSO} TSA ERIC &  & 1.03 & 1.15 & 1.22 & 1.16 & 1.03 & 1.17 & 1.22 & 1.21 & 1.05 & 1.16 & 1.27 & 1.23 & 1.05 & 1.15 & 1.24 & 1.20 \\ 
  {\ESTROWLASSO} SA BIC &  & \textminbf {0.61} & \textminbf {0.48} & \textminbf {0.45} & \textminbf {0.51} & \textminbf {0.61} & \textminbf {0.48} & 1.02 & 1.06 & \textminbf {0.60} & \textminbf {0.49} & 1.02 & 1.02 & \textminbf {0.62} & \textminbf {0.48} & \textminbf {0.45} & 1.06 \\ 
  {\ESTROWLASSO} TSA ERIC &  & 1.03 & 1.04 & 1.07 & 1.04 & 1.03 & 1.06 & 1.07 & 1.04 & 1.03 & 1.04 & 1.09 & 1.04 & 1.02 & 1.08 & 1.07 & 1.04 \\ 
  {\ESTROWLASSO} TSA BIC &  & \textminbf {0.61} & \textminbf {0.48} & \textminbf {0.45} & \textminbf {0.51} & \textminbf {0.61} & \textminbf {0.48} & \textminbf {0.45} & \textminbf {0.48} & 1.02 & \textminbf {0.49} & \textminbf {0.44} & \textminbf {0.48} & 1.02 & \textminbf {0.48} & \textminbf {0.45} & \textminbf {0.50} \\ 
  {\ESTROWDANTZIG} TSA ERIC &  & 1.05 & 1.08 & 1.18 & 1.90 & 1.05 & 1.10 & 1.13 & 2.25 & 1.05 & 1.10 & 1.16 & 4.90 & 1.03 & 1.10 & 1.16 & 1.86 \\ 
  {\ESTROWDANTZIG} TSA BIC &  & 1.07 & 1.10 & 1.44 & 1.04 & 1.07 & 1.13 & 1.20 & 2566.75 & 1.08 & 1.10 & 1.16 & 3.44 & 1.06 & 1.10 & 1.11 & 5.64 \\ 
   \hline
{\ESTVECLASSO} SA ERIC & \multirow{7}{0.5cm}{50} & 1.06 & 1.22 & 1.33 & 1.33 & 1.06 & 1.24 & 1.33 & 1.33 & 1.08 & 1.24 & 1.33 & 1.29 & 1.06 & 1.25 & 1.31 & 1.31 \\ 
  {\ESTVECLASSO} TSA ERIC &  & 1.09 & 1.24 & 1.33 & 1.35 & 1.12 & 1.25 & 1.35 & 1.33 & 1.11 & 1.24 & 1.33 & 1.29 & 1.11 & 1.25 & 1.33 & 1.33 \\ 
  {\ESTROWLASSO} SA BIC &  & \textminbf {0.67} & 1.02 & \textminbf {0.51} & \textminbf {0.49} & \textminbf {0.65} & 1.02 & \textminbf {0.51} & 1.02 & 1.02 & \textminbf {0.54} & \textminbf {0.51} & \textminbf {0.51} & \textminbf {0.65} & 1.02 & \textminbf {0.52} & \textminbf {0.51} \\ 
  {\ESTROWLASSO} TSA ERIC &  & 1.10 & 1.06 & 1.02 & 1.06 & 1.17 & 1.10 & 1.02 & 1.08 & 1.16 & 1.06 & 1.02 & 1.06 & 1.15 & 1.08 & \textminbf {0.52} & 1.06 \\ 
  {\ESTROWLASSO} TSA BIC &  & \textminbf {0.67} & \textminbf {0.54} & 1.02 & \textminbf {0.49} & \textminbf {0.65} & \textminbf {0.51} & 1.02 & \textminbf {0.49} & \textminbf {0.64} & \textminbf {0.54} & 1.02 & \textminbf {0.51} & \textminbf {0.65} & \textminbf {0.52} & \textminbf {0.52} & 1.02 \\ 
  {\ESTROWDANTZIG} TSA ERIC &  & 1.10 & 1.13 & 1.08 & 1.63 & 1.17 & 1.16 & 1.08 & 1.82 & 1.17 & 1.13 & 1.10 & 2.59 & 1.15 & 1.15 & 1.08 & 1.10 \\ 
  {\ESTROWDANTZIG} TSA BIC &  & 1.07 & 1.17 & 1.14 & 1.20 & 1.11 & 1.20 & 1.16 & 1.88 & 1.09 & 1.17 & 1.35 & 1.29 & 1.09 & 1.17 & 1.12 & 15.75 \\ 
   \hline
{\ESTVECLASSO} SA ERIC & \multirow{7}{0.5cm}{100} & 1.10 & 1.23 & 1.28 & 1.26 & 1.11 & 1.22 & 1.24 & 1.73 & 1.08 & 1.22 & 1.24 & 1.20 & 1.09 & 1.22 & 1.26 & 1.23 \\ 
  {\ESTVECLASSO} TSA ERIC &  & 1.12 & 1.20 & 1.27 & 1.26 & 1.11 & 1.20 & 1.24 & 1.73 & 1.11 & 1.20 & 1.24 & 1.20 & 1.11 & 1.20 & 1.25 & 1.21 \\ 
  {\ESTROWLASSO} SA BIC &  & \textminbf {0.73} & \textminbf {0.64} & \textminbf {0.60} & \textminbf {0.62} & \textminbf {0.74} & 1.02 & \textminbf {0.62} & \textminbf {0.63} & \textminbf {0.75} & \textminbf {0.65} & \textminbf {0.62} & \textminbf {0.64} & \textminbf {0.75} & 1.02 & \textminbf {0.61} & \textminbf {0.62} \\ 
  {\ESTROWLASSO} TSA ERIC &  & 1.70 & 1.95 & 1.62 & 1.35 & 1.72 & 1.95 & 1.58 & 1.33 & 1.65 & 1.85 & 1.56 & 1.42 & 1.64 & 1.95 & 1.59 & 1.53 \\ 
  {\ESTROWLASSO} TSA BIC &  & \textminbf {0.73} & \textminbf {0.64} & \textminbf {0.60} & 1.02 & \textminbf {0.74} & \textminbf {0.64} & \textminbf {0.62} & \textminbf {0.63} & \textminbf {0.75} & 1.02 & \textminbf {0.62} & \textminbf {0.64} & 1.01 & \textminbf {0.64} & \textminbf {0.61} & 1.02 \\ 
  {\ESTROWDANTZIG} TSA ERIC &  & 1.49 & 1.75 & 1.60 & 46.39 & 1.49 & 1.78 & 1.52 & 1.38 & 1.44 & 1.71 & 1.52 & 1.73 & 1.45 & 1.81 & 1.69 & 1.79 \\ 
  {\ESTROWDANTZIG} TSA BIC &  & 1.08 & 1.12 & 1.13 & 29.52 & 1.07 & 1.12 & 1.13 & 1.14 & 1.07 & 1.14 & 1.11 & 2.55 & 1.08 & 1.11 & 1.13 & 1.40 \\ 
   \hline
\end{tabular}
\caption{Example 2 -- VAR(1), $\int \|f(\omega)-\hat f(\omega)\|_\infty d\omega/\int \|f(\omega)\|_\infty d\omega$, n=100} 
\end{sidewaystable}
% latex sidewaystable generated in R 3.5.1 by xsidewaystable 1.8-4 package
% Mon May 25 15:37:28 2020
\begin{sidewaystable}[ht]
\centering
\begin{tabular}{|l|l|rrrr|rrrr|rrrr|rrrr|}
  \hline & $s$   & \multicolumn{4}{c|}{$1$}&\multicolumn{4}{c|}{$3$} &\multicolumn{4}{c|}{$5$} &\multicolumn{4}{c|}{$10$}    \\ \hline & \diagbox{$d$}{$\rho$}  & 0.6 & 0.8 & 0.9 & 0.95 & 0.6 & 0.8 & 0.9 & 0.95 & 0.6 & 0.8 & 0.9 & 0.95 & 0.6 & 0.8 & 0.9 & 0.95 \\ \hline 
 {\ESTVECLASSO} SA ERIC & \multirow{7}{0.5cm}{10} & \textminbf {1.06} & \textminbf {1.05} & \textminbf {1.06} & \textminbf {1.06} & 1.01 & \textminbf {1.09} & \textminbf {1.10} & \textminbf {1.11} & \textminbf {1.07} & \textminbf {1.07} & \textminbf {1.06} & \textminbf {1.06} & \textminbf {1.10} & \textminbf {1.11} & \textminbf {1.10} & \textminbf {1.11} \\ 
  {\ESTVECLASSO} TSA ERIC &  & 1.01 & 1.01 & \textminbf {1.06} & \textminbf {1.06} & 1.03 & 1.01 & 1.01 & 1.01 & 1.01 & 1.01 & \textminbf {1.06} & \textminbf {1.06} & 1.02 & 1.01 & 1.01 & \textminbf {1.11} \\ 
  {\ESTROWLASSO} SA BIC &  & 1.01 & \textminbf {1.05} & \textminbf {1.06} & \textminbf {1.06} & 1.02 & \textminbf {1.09} & \textminbf {1.10} & \textminbf {1.11} & 1.01 & \textminbf {1.07} & \textminbf {1.06} & \textminbf {1.06} & 1.02 & \textminbf {1.11} & \textminbf {1.10} & \textminbf {1.11} \\ 
  {\ESTROWLASSO} TSA ERIC &  & \textminbf {1.06} & \textminbf {1.05} & \textminbf {1.06} & \textminbf {1.06} & \textminbf {1.09} & \textminbf {1.09} & \textminbf {1.10} & 1.01 & \textminbf {1.07} & \textminbf {1.07} & \textminbf {1.06} & \textminbf {1.06} & \textminbf {1.10} & \textminbf {1.11} & \textminbf {1.10} & \textminbf {1.11} \\ 
  {\ESTROWLASSO} TSA BIC &  & 1.02 & \textminbf {1.05} & \textminbf {1.06} & \textminbf {1.06} & 1.03 & \textminbf {1.09} & \textminbf {1.10} & \textminbf {1.11} & 1.02 & \textminbf {1.07} & \textminbf {1.06} & \textminbf {1.06} & 1.03 & 1.01 & \textminbf {1.10} & \textminbf {1.11} \\ 
  {\ESTROWDANTZIG} TSA ERIC &  & \textminbf {1.06} & \textminbf {1.05} & \textminbf {1.06} & 1.01 & \textminbf {1.09} & \textminbf {1.09} & \textminbf {1.10} & \textminbf {1.11} & \textminbf {1.07} & \textminbf {1.07} & \textminbf {1.06} & \textminbf {1.06} & \textminbf {1.10} & \textminbf {1.11} & \textminbf {1.10} & \textminbf {1.11} \\ 
  {\ESTROWDANTZIG} TSA BIC &  & 1.02 & 1.01 & 1.01 & 1.01 & 1.03 & 1.01 & \textminbf {1.10} & \textminbf {1.11} & 1.02 & 1.01 & \textminbf {1.06} & 1.01 & 1.03 & 1.01 & 1.01 & \textminbf {1.11} \\ 
   \hline
{\ESTVECLASSO} SA ERIC & \multirow{7}{0.5cm}{25} & \textminbf {1.13} & 1.02 & 1.02 & 1.03 & \textminbf {1.13} & 1.02 & 1.02 & 1.02 & 1.01 & 1.02 & 1.03 & 1.03 & \textminbf {1.14} & 1.02 & 1.03 & 1.03 \\ 
  {\ESTVECLASSO} TSA ERIC &  & 1.04 & 1.03 & 1.03 & 1.04 & 1.04 & 1.03 & 1.03 & 1.03 & 1.04 & 1.03 & 1.03 & 1.03 & 1.04 & 1.03 & 1.03 & 1.03 \\ 
  {\ESTROWLASSO} SA BIC &  & 1.02 & \textminbf {1.11} & \textminbf {1.11} & \textminbf {1.10} & 1.02 & \textminbf {1.11} & \textminbf {1.10} & \textminbf {1.10} & 1.02 & 1.01 & \textminbf {1.10} & \textminbf {1.10} & 1.03 & \textminbf {1.11} & \textminbf {1.10} & \textminbf {1.10} \\ 
  {\ESTROWLASSO} TSA ERIC &  & \textminbf {1.13} & \textminbf {1.11} & \textminbf {1.11} & 1.01 & \textminbf {1.13} & \textminbf {1.11} & \textminbf {1.10} & \textminbf {1.10} & \textminbf {1.13} & \textminbf {1.13} & 1.01 & \textminbf {1.10} & \textminbf {1.14} & \textminbf {1.11} & 1.01 & \textminbf {1.10} \\ 
  {\ESTROWLASSO} TSA BIC &  & 1.04 & 1.01 & \textminbf {1.11} & \textminbf {1.10} & 1.04 & 1.01 & \textminbf {1.10} & \textminbf {1.10} & 1.04 & 1.02 & 1.01 & \textminbf {1.10} & 1.04 & 1.01 & 1.01 & \textminbf {1.10} \\ 
  {\ESTROWDANTZIG} TSA ERIC &  & 1.02 & 1.03 & 1.05 & 1.08 & 1.02 & 1.03 & 1.05 & 1.06 & 1.02 & 1.04 & 1.05 & 1.06 & 1.02 & 1.03 & 1.05 & 1.07 \\ 
  {\ESTROWDANTZIG} TSA BIC &  & 1.05 & 1.04 & 1.03 & 1.03 & 1.05 & 1.04 & 1.03 & 1.03 & 1.05 & 1.04 & 1.04 & 1.03 & 1.05 & 1.03 & 1.04 & 1.03 \\ 
   \hline
{\ESTVECLASSO} SA ERIC & \multirow{7}{0.5cm}{50} & 1.01 & 1.02 & 1.04 & 1.04 & \textminbf {1.17} & 1.02 & 1.04 & 1.03 & \textminbf {1.17} & 1.03 & 1.04 & 1.04 & 1.01 & 1.02 & 1.03 & 1.04 \\ 
  {\ESTVECLASSO} TSA ERIC &  & 1.06 & 1.06 & 1.06 & 1.05 & 1.05 & 1.05 & 1.06 & 1.04 & 1.05 & 1.05 & 1.06 & 1.05 & 1.06 & 1.04 & 1.06 & 1.06 \\ 
  {\ESTROWLASSO} SA BIC &  & 1.01 & 1.01 & 1.01 & \textminbf {1.13} & \textminbf {1.17} & \textminbf {1.14} & 1.01 & \textminbf {1.15} & \textminbf {1.17} & \textminbf {1.14} & 1.01 & \textminbf {1.15} & 1.01 & \textminbf {1.14} & 1.01 & \textminbf {1.15} \\ 
  {\ESTROWLASSO} TSA ERIC &  & \textminbf {1.18} & \textminbf {1.15} & \textminbf {1.15} & 1.01 & \textminbf {1.17} & \textminbf {1.14} & \textminbf {1.15} & \textminbf {1.15} & \textminbf {1.17} & \textminbf {1.14} & \textminbf {1.15} & \textminbf {1.15} & \textminbf {1.16} & \textminbf {1.14} & \textminbf {1.15} & \textminbf {1.15} \\ 
  {\ESTROWLASSO} TSA BIC &  & 1.03 & 1.03 & 1.03 & 1.01 & 1.02 & 1.02 & 1.02 & 1.01 & 1.02 & 1.02 & 1.03 & 1.01 & 1.03 & 1.02 & 1.03 & 1.02 \\ 
  {\ESTROWDANTZIG} TSA ERIC &  & \textminbf {1.18} & 1.02 & 1.03 & 1.04 & \textminbf {1.17} & 1.02 & 1.03 & 1.03 & \textminbf {1.17} & 1.02 & 1.03 & 1.04 & 1.01 & 1.02 & 1.03 & 1.04 \\ 
  {\ESTROWDANTZIG} TSA BIC &  & 1.05 & 1.05 & 1.06 & 1.05 & 1.04 & 1.04 & 1.06 & 1.05 & 1.04 & 1.05 & 1.06 & 1.05 & 1.05 & 1.04 & 1.06 & 1.06 \\ 
   \hline
{\ESTVECLASSO} SA ERIC & \multirow{7}{0.5cm}{100} & 1.01 & 1.04 & 1.05 & 1.06 & 1.02 & 1.03 & 1.05 & 1.07 & 1.02 & 1.03 & 1.05 & 1.05 & 1.02 & 1.03 & 1.05 & 1.06 \\ 
  {\ESTVECLASSO} TSA ERIC &  & 1.07 & 1.10 & 1.09 & 1.09 & 1.09 & 1.10 & 1.10 & 1.10 & 1.08 & 1.10 & 1.10 & 1.09 & 1.07 & 1.10 & 1.09 & 1.09 \\ 
  {\ESTROWLASSO} SA BIC &  & \textminbf {1.24} & \textminbf {1.20} & \textminbf {1.18} & \textminbf {1.16} & \textminbf {1.23} & \textminbf {1.21} & \textminbf {1.18} & \textminbf {1.17} & \textminbf {1.23} & \textminbf {1.23} & \textminbf {1.20} & \textminbf {1.17} & \textminbf {1.25} & \textminbf {1.21} & \textminbf {1.18} & \textminbf {1.16} \\ 
  {\ESTROWLASSO} TSA ERIC &  & 1.06 & 1.05 & 1.03 & 1.04 & 1.07 & 1.04 & 1.04 & 1.04 & 1.07 & 1.04 & 1.03 & 1.04 & 1.06 & 1.04 & 1.03 & 1.05 \\ 
  {\ESTROWLASSO} TSA BIC &  & 1.02 & 1.03 & 1.02 & 1.02 & 1.03 & 1.02 & 1.03 & 1.03 & 1.02 & 1.02 & 1.03 & 1.02 & 1.02 & 1.02 & 1.02 & 1.03 \\ 
  {\ESTROWDANTZIG} TSA ERIC &  & 1.04 & 1.04 & 1.05 & 1.07 & 1.05 & 1.04 & 1.05 & 1.07 & 1.05 & 1.03 & 1.05 & 1.07 & 1.04 & 1.03 & 1.04 & 1.07 \\ 
  {\ESTROWDANTZIG} TSA BIC &  & 1.05 & 1.06 & 1.06 & 1.07 & 1.06 & 1.06 & 1.06 & 1.07 & 1.06 & 1.06 & 1.06 & 1.06 & 1.05 & 1.05 & 1.05 & 1.07 \\ 
   \hline
\end{tabular}
\caption{Example 2 -- VAR(1), $1/d\sum_{j=1}^d MSE(\hat X_{n+1;j})/\sigma_j^2$, n=100} 
\end{sidewaystable}

\FloatBarrier
{\bf Acknowledgments.}  The research of the first author was supported by the Research Center (SFB) 884 ``Political Economy of Reforms''(Project B6), funded by the German Research Foundation
(DFG). Furthermore, the authors acknowledge support by the state of Baden-W{\"u}rttemberg through bwHPC.

\section{Proofs}
\begin{proof}[Proof of Theorem~\ref{thm.thres}]
Since for a matrix $A$ we have $\|A\|_\infty=\max_j \|e_j^\top A\|_1$ and $\|A\|_1=\max_j \|Ae_j\|_1$, it is sufficient to show that for all $j=1,\dots,p$,  $\|\A-\THRl(\hat \A)\|_\infty=\sum_{k=1}^d \|e_j^\top(A_k-\THRl(\hat A_k))\|_1 \leq C_2  t_n$ and $\|\A-\THRl(\hat \A)\|_1=\max_{1\leq k \leq p} \|(A_k-\THRl(\hat A_k))e_j\|_1 \leq C_2 t_n$. 
In order to bound $\sum_{k=1}^d \|e_j^\top(A_k-\THRl(\hat A_k))\|_1$ we can mainly follow the arguments used in  the proof of Theorem 1 in \cite{cai2011adaptive}. 
%For completeness, we present these arguments here. 
For some $j$ and by the conditions imposed on  the thresholding operation  and 
because  $(A_1,\dots,A_p)\in\mathcal{M}(q,s,M,p)$ and $\max_k \|A_k-\hat A_k\|_{\max}\leq {\lambdan}$, we have that 
\begin{align*}
    \sum_{k=1}^p \|e_j^\top(A_k-\THRl(\hat A_k))\|_1=&
    \sum_{k=1}^p \max_j \sum_{i=1}^d |A_{k;ji}-\THRl(\hat A_{k;ji})|\\
    =&\sum_{k=1}^p \max_j \sum_{i=1}^d |\THRl(\hat A_{k;ji})-A_{k;ji}|\ind(|\hat A_{k;ji}|>{\lambdan}, |A_{k;ji}|>{\lambdan})\\
    &+\sum_{k=1}^p \max_j \sum_{i=1}^d |A_{k;ji}|\ind(|\hat A_{k;ji}|\leq {\lambdan}, |A_{k;ji}|>{\lambdan})\\
    &+\sum_{k=1}^p\max_j \sum_{i=1}^d |\THRl(\hat A_{k;ji})-A_{k;ji}|\ind(|\hat A_{k;ji}|>{\lambdan}, |A_{k;ji}|\leq{\lambdan})\\
    &+\sum_{k=1}^p \max_j \sum_{i=1}^d |A_{k;ji}|^{q}|A_{k;ji}|^{1-q}\frac{\lambdan^{1-q}}{\lambdan^{1-q}}\ind(|\hat A_{k;ji}|\leq {\lambdan}, |A_{k;ji}|\leq{\lambdan})\\
    \leq& 2\sum_{k=1}^p \max_j \sum_{i=1}^d \lambdan^{q}\frac{|A_{k;ji}|^q}{|A_{k;ji}|^q}\lambdan^{1-q}\ind(|A_{k;ji}|>\lambdan)\\
    &+(1+c)\sum_{k=1}^p \max_j \sum_{i=1}^d |A_{k;ji}|\ind(|\hat A_{k;ji}|>{\lambdan}, |A_{k;ji}|\leq{\lambdan})\\
    &+\lambdan^{1-q} \sum_{k=1}^p \max_j\sum_{i=1}^d |A_{k;ji}|^{q}\ind(|\hat A_{k;ji}|\leq {\lambdan}, |A_{k;ji}|\leq{\lambdan})\\
    \leq& \lambdan^{1-q}s(4+c)=(4+c)C_1^{1-q}s( t_n)^{1-q}.
\end{align*}
This implies $\|\A- \THRl(\hat A)\|_\infty=\sum_{k=1}^p \|A_k-\THRl(\hat A_k)\|_\infty \leq (4+c)C_1^{1-q}s t_n^{1-q}$. $\|\A- \THRl(\hat A)\|_1=\max_{1\leq k \leq p} \|A_k-\THRl(\hat A_k)\|_1$ can be bounded by the same arguments.
\end{proof}

\begin{proof}[Proof of Corollary~\ref{cor.Dantzig}]
We have $(\hat A_1^\TC,\dots,\hat A_p^\TC)=\THRl(\hat \beta_1^\CL,\dots,\hat \beta_d^\CL)^\top$. For each $\hat\beta_j^\CL$, $j=1,\dots,d$,  an error bound with respect to the $\|\cdot\|_{\max}$ norm  is obtained by Theorem~4 in \cite{wu2016performance} on a set with probability of at least $\tilde p_n^{\CL}$ as defined in \eqref{eq.error.bound.wu}. Since all $\hat\beta_j^\CL$ share the same regressors, only the event denoted by $B$ in the proof of Theorem~4 in \cite{wu2016performance} differs among the $\hat\beta_j^\CL$. Hence, the probability of the intersection  of all these $p$ events, where the corresponding set is denoted  by $B$, and of the event denoted by $A$ in the proof of Theorem~4 in \cite{wu2016performance},  is  at least $p_n^\CL$.  Assertion \eqref{eq.Gammas.max.error.bound} and \eqref{eq.eps.max.error.bound} follow directly by arguments used in the proof of Theorem~4 in \cite{wu2016performance} with 
\[ a=( \frac{\sqrt{\log(d)}}{\sqrt N}+\frac{d^{4/\tau}}{N^{1-2/\tau}})(\sum_{j=0}^\infty \|\A^j\|_2 C_{\eps,\tau})^2\]
and 
\[ b=( \frac{\sqrt{\log(d)}}{\sqrt N}+\frac{d^{1/\tau}}{N^{1-1/\tau}})(\sum_{j=0}^\infty \|\A^j\|_2 C_{\eps,\tau})^2.\]
See also Example~1 and 4, and Remark 6 in \cite{wu2016performance}. 

On a set with probability of at least $p_n^\CL$, Theorem~4 in \cite{wu2016performance} leads  with  the above choice of $a$ and $b$, to  the bound 
\begin{align*}
\|B-\hat B^\CL\|_{\max} & \leq 2 \|\Gammas^{-1}\|_1 (\sum_{j=0}^\infty \|\A^j\|_2 C_{\eps,\tau})^2\\
& \ \ \ \times \Big[( \frac{\sqrt{\log(dp)}}{\sqrt N}+\frac{dp^{4/\tau}}{N^{1-2/\tau}})M+( \frac{\sqrt{\log(dp)}}{\sqrt N}+\frac{dp^{1/\tau}}{N^{1-1/\tau}})\Big]
\end{align*}
Following the proof of Theorem~6 in \cite{cai2011constrained}, i.e. the arguments leading to equation (27) in the aforecited paper, we obtain $\|B-\hat B^\CL\|_{\infty}\leq (1+2^{1-q}+3^{1-q}) s (2 \|\Gammas^{-1}\|_1 (\sum_{j=0}^\infty \|\A^j\|_2 C_{\eps,\tau})^2 \Big[( \frac{\sqrt{\log(dp)}}{\sqrt N}+\frac{dp^{4/\tau}}{N^{1-2/\tau}})M+( \frac{\sqrt{\log(dp)}}{\sqrt N}+\frac{dp^{1/\tau}}{N^{1-1/\tau}})\Big])^{1-q}$. Furthermore, we have 
\begin{align*}
&\|B-\hat B^\CL\|_{\max}\leq \| \Gammas^{-1} (\Gammas+\mathcal{X}^\top \mathcal{X}/N-\mathcal{X}^\top \mathcal{X}/N)(B-\hat B^\CL)\|_{\max} \\
&\leq \|\Gammas^{-1}\|_1 \Big(\|\Gammas-\mathcal{X}^\top \mathcal{X}/N\|_{\max} \|B-\hat B^\CL\|_\infty \\
& \ \ \ \ +\|\mathcal{X}^\top(\mathcal{Y}-\hat B^\CL)/N\|_{\max}+\|\mathcal{X}^\top \mathcal{\eps}/N\|_{\max}\Big),
\end{align*}
which gives expression  \eqref{eq.Dantzig.max}.  \eqref{eq.VAR.TC.error.bound} follows then directly by Theorem~\ref{thm.thres}.
\end{proof}

\begin{proof}[Proof of Theorem~\ref{thm.acf}]
Notice that  $\Gamma(h)^\st=\A^h \Gammas$ which implies $\hat \Gamma(h)^\st-\Gamma(h)^\st=\hat \A^h(\Gammah-\Gammas)+(\hat \A^h-\A^h)\Gammas.$ Furthermore, we have $\hat\Gamma(0)^\st-\Gamma(0)^\st=\sum_{s=1}^\infty (\hat \A^{s}-\A^{s}) \Sigma_U (\A^s)^\top+\sum_{s=0}^\infty \hat \A^{s} (\hat \Sigma_U-\Sigma_U) (\A^s)^\top+\sum_{s=1}^\infty \hat \A^{s} \hat\Sigma_U (\hat A^s-\A^s)^\top. 
$ Following the proof of Lemma 8 in \cite{krampe2018bootstrap} we have for $s\geq 1$, $\hat \A^s -\A^s=\sum_{j=0}^{s-1}\hat \A^j(\hat \A-\A)\A^{s-1-j}.$
Hence, 
$\hat\Gamma(0)^\st-\Gamma(0)^\st=\sum_{s=1}^\infty  \sum_{j=0}^{s-1}\hat \A^j(\hat \A-\A)\A^{s-1-j} \Sigma_U (\A^s)^\top+\sum_{s=0}^\infty \hat \A^{s} (\hat \Sigma_U-\Sigma_U) (\A^s)^\top+\sum_{s=1}^\infty \hat \A^{s} \hat\Sigma_U (\sum_{j=0}^{s-1}[\hat \A]^j(\hat \A-\A)\A^{s-1-j})^\top$. For some sub-multiplicative matrix norm $\|\cdot\|$ and since $ab\leq (a^2+b^2)/2$, we further have
\begin{align*}
\|\hat\Gamma(0)^\st-\Gamma(0)^\st\|\leq& \|\hat \A -\A\| 
\|\Sigmaeps\| \sum_{j=0}^\infty \|\hat \A^j \| \|(\A^j)^\top\| \sum_{s=0}^\infty \|\A^s \| \|(\A^s)^\top\|\\&
+ \|\Sigmaeps-\Sigmah\| \sum_{j=0}^\infty \|\hat \A^j \| \|(\A^j)^\top\|\\&+\|\hat \A^\top -\A^\top\| \|\Sigmah\| \sum_{j=0}^\infty \|\hat \A^j \| \|(\A^j)^\top\| \sum_{s=0}^\infty \|\hat \A^s \| \|(\hat \A^s)^\top\| \\
\leq& \|\hat \A -\A\| C_{\gamma,\Sigmaeps} (C_{\gamma,\hat A}+C_{\gamma, A^\top})(C_{\gamma, A}+C_{\gamma,A^\top})/4 +
\|\Sigmah -\Sigmaeps\| (C_{\gamma,\hat A}+C_{\gamma, A^\top})/2 \\
&+ \|\hat \A^\top -\A^\top\| (C_{\gamma,\Sigmaeps}+\|\Sigmah -\Sigmaeps\|) (C_{\gamma,\hat A}+C_{\gamma, A^\top})(C_{\gamma, \hat A}+C_{\gamma,\hat A^\top})/4.
\end{align*}
\end{proof}

\begin{proof}[Proof of Theorem ~\ref{thm.spec}]
We first write  
\begin{align*}
f^{-1}(\omega)-\hat f^{-1}(\omega) & =
(\mathcal{\hat A}(\exp(i\omega))-\mathcal{\hat A}(\exp(i\omega)))^\top \Sigmaeps^{-1}  \mathcal{A}(\exp(-i\omega))\\
& \ \ +
\mathcal{A}(\exp(i\omega))^\top (\Sigmah^{-1}-\Sigmaeps^{-1} )\mathcal{A}(\exp(-i\omega))\\
& \ \ +
\mathcal{A}(\exp(i\omega))^\top \Sigmaeps^{-1}  (\mathcal{\hat A}(\exp(-i\omega))-\mathcal{A}(\exp(-i\omega)))\\
& \ \ +
(\mathcal{\hat A}(\exp(i\omega))-\mathcal{ A}(\exp(i\omega)))^\top (\Sigmah^{-1}-\Sigmaeps^{-1} )\mathcal{A}(\exp(-i\omega)) \\
& \ \ +
\mathcal{A}(\exp(i\omega))^\top (\Sigmah^{-1}-\Sigmaeps^{-1} )(\mathcal{\hat A}(\exp(-i\omega))-\mathcal{ A}(\exp(-i\omega))) \\
& +
(\mathcal{\hat A}(\exp(i\omega))-\mathcal{ A}(\exp(i\omega)))^\top (\Sigmah^{-1}-\Sigmaeps^{-1} )(\mathcal{\hat A}(\exp(-i\omega))-\mathcal{ A}(\exp(-i\omega))).
\end{align*}
Observe that  
\begin{align*}
\|(\mathcal{\hat A}( \exp(i\omega))-\mathcal{\hat A}(\exp(i\omega)))^\top\|_1 & \leq \sum_{s=1}^p \|\hat A_s^\top-A_s^\top\|_1\\
& =\sum_{s=1}^p \|\hat A_s-A_s\|_\infty\leq t_{n,1}
\end{align*}
and that
\begin{align*}
\|(\mathcal{\hat A}(\exp(i\omega))-\mathcal{\hat A}(\exp(i\omega)))^\top\|_\infty\leq \sum_{s=1}^p \|\hat A_s-A_s\|_1\leq t_{n,1}.
\end{align*}
Hence, $\|(\mathcal{\hat A}(\exp(i\omega))-\mathcal{\hat A}(\exp(i\omega)))^\top\|_l\leq t_{n,1}$ and $\|\mathcal{\hat A}(\exp(i\omega))-\mathcal{\hat A}(\exp(i\omega))\|_l\leq t_{n,1}$ for all $l \in [1,\infty]$. Since $\|\cdot\|_l$ is sub-multiplicative and $(A_1,\dots,A_p)\in \mathcal{M}(q,s,M,p)$, $\Sigmaeps^{-1}\in \mathcal{M}(q_{\eps^{-1}},s_{\eps^{-1}},M_{\eps^{-1}},1)$, the assertion follows. 
\end{proof}

\bibliographystyle{apalike}
\bibliography{bib}

%\clearpage

%\section*{Appendix}

%\FloatBarrier
%\subsection*{Example 1}
%\input{Tables}
%\FloatBarrier
%\subsection*{Example 2}
%\input{Tables_sec2.tex}

\end{document}